\documentclass[11pt]{article}

\usepackage{nameref,zref-xr}
\zxrsetup{toltxlabel}
\usepackage[colorlinks=True,citecolor=blue]{hyperref}       
\usepackage{url}            
\usepackage{booktabs}       
\usepackage{amsmath,amsthm,amsfonts,bm}       
\usepackage{cancel}
\usepackage{nicefrac}       
\usepackage{microtype}      
\usepackage{xcolor}         
\usepackage{amsmath,amsthm,amsfonts,amssymb,mathrsfs,bm,graphicx,amscd,epsfig,psfrag,verbatim,hyperref}
\usepackage{graphicx}
\usepackage{subfigure}
\usepackage{pdflscape}
\usepackage{epsfig}

\oddsidemargin 0pt
\evensidemargin 0pt
\marginparwidth 10pt
\marginparsep 10pt
\topmargin -20pt
\textwidth 6.5in
\textheight 8.5in
\parindent = 20pt

\bibliographystyle{unsrt}

\newtheorem{theorem}{Theorem}

\newtheorem{proposition}[theorem]{Proposition}

\def\R{\mathbb{R}}

\def\N{\mathbb{N}}
\def\E{\mathbb{E}}

\def\wt{\widetilde}
\def\bfK{\mathbf{K}}

\def\var{\mathrm{Var}}

\def\t{\theta}

\def\el{\mathcal{L}}

\def\kqp{K_q^{(p)}}
\def\g{\gamma}
\def\tg{\tilde{\gamma}}
\def\del{\delta}

\def\z{\bm{z}}

\def\nt{\nabla_\theta}
\def\hm{\hat{\gamma}}

\def\ka{K_{q,\tg}^{(p)}}
\def\hka{\wt{\mathbf{K}}}
\def\D{\mathfrak{D}}

\def\xa{\Xi_{A}}
\def\xap{\Xi_{A,\mathrm{Poi}}}
\def\xas{\Xi_{A,\mathrm{s}}}
\def\xad{\Xi_{A,\mathrm{DPP}}}
\def\Kqp{K_q^{(p)}}
\def\Kqgtp{K_{q,\tilde\gamma}^{(p)}}

\def\w{\bm{w}}
\def\cM{\mathcal{M}}

\def\lg{\langle}
\def\rg{\rangle}

\def\d{\mathrm{d}}
\def\det{\mathrm{Det}}

\newcommand{\numberthis}{\addtocounter{equation}{1}\tag{\theequation}}

\renewcommand{\l}[0]{\left }
\renewcommand{\r}[0]{\right}

\makeatletter
\renewcommand*{\@cite@ofmt}{\hbox}
\makeatother

\usepackage{multibib}
\newcites{sec}{Appendices}

\begin{document}

\title{\bf Determinantal point processes based on orthogonal polynomials for sampling minibatches in SGD }
\author{R\'emi Bardenet\footnotemark[1] \footnotemark[2] , \quad Subhroshekhar Ghosh\footnotemark[1] \footnotemark[3] , \quad Meixia Lin\footnotemark[1] \footnotemark[4]}
\date{\today}
\maketitle

\renewcommand{\thefootnote}{\fnsymbol{footnote}}
\footnotetext[1]{Alphabetical order.}
\footnotetext[2]{Universit\'e de Lille, CNRS, Centrale Lille, UMR 9189--CRIStAL, F-59000 Lille, France ({\tt remi.bardenet@univ-lille.fr}).}
\footnotetext[3]{Corresponding author. National University of Singapore, Department of Mathematics, 10 Lower Kent Ridge Road, 119076, Singapore ({\tt subhrowork@gmail.com}).}
\footnotetext[4]{Corresponding author. National University of Singapore, Institute of Operations Research and Analytics, 10 Lower Kent Ridge Road, 119076, Singapore ({\tt lin\_meixia@u.nus.edu}).}
\renewcommand{\thefootnote}{\arabic{footnote}}

\begin{abstract}
Stochastic gradient descent (SGD) is a cornerstone of machine learning. When the number $N$ of data items is large, SGD relies on constructing an unbiased estimator of the gradient of the empirical risk using a small subset of the original dataset, called a minibatch. Default minibatch construction involves uniformly sampling a subset of the desired size, but alternatives have been explored for variance reduction. In particular, experimental evidence suggests drawing minibatches from determinantal point processes (DPPs), tractable distributions over minibatches that favour diversity among selected items. However, like in recent work on DPPs for coresets, providing a systematic and principled understanding of how and why DPPs help has been difficult. In this work, we contribute an orthogonal polynomial-based determinantal point process paradigm for performing minibatch sampling in SGD. Our approach leverages the specific data distribution at hand, which endows it with greater sensitivity and power over existing data-agnostic methods. We substantiate our method via a detailed theoretical analysis of its convergence properties, interweaving between the discrete data set and the underlying continuous domain. In particular, we show how specific DPPs and a string of controlled approximations can lead to gradient estimators with a variance that decays faster with the batchsize than under uniform sampling. Coupled with existing finite-time guarantees for SGD on convex objectives, this entails that, for a large enough batchsize and a fixed budget of item-level gradients to evaluate, DPP minibatches lead to a smaller bound on the mean square approximation error than uniform minibatches. Moreover, our estimators are amenable to a recent algorithm that directly samples linear statistics of DPPs (i.e., the gradient estimator) without sampling the underlying DPP (i.e., the minibatch), thereby reducing computational overhead. We provide detailed synthetic as well as real data experiments to substantiate our theoretical claims.
\end{abstract}

\vspace{1.5cm}

\section{Introduction}
\label{s:introduction}
Consider minimizing an empirical loss
\begin{equation}
\label{e:ERM}
\min_{\theta\in\Theta} \ \frac{1}{N} \cdot \sum_{i=1}^N \el(\z_i,\t) + \lambda(\theta),
\end{equation}
with some penalty $\lambda:\Theta\mapsto \mathbb{R}_+$.
Many learning tasks, such as regression and classification, are usually framed that way \cite{ShBe14}. When $N\gg 1$, computing the gradient of the objective in \eqref{e:ERM} becomes a bottleneck, even if individual gradients $\nt \el(\z_i,\t)$ are cheap to evaluate.
For a fixed computational budget, it is thus tempting to replace vanilla gradient descent by more iterations but using an approximate gradient, obtained using only a few data points.
Stochastic gradient descent (SGD; \cite{RoMo51}) follows this template. In its simplest form, SGD builds an unbiased estimator at each iteration of gradient descent, independently from past iterations, using a minibatch of random samples from the data set. Theory \cite{BaMo11} and practice suggest that the variance of the gradient estimators in SGD should be kept as small as possible. It is thus natural that variance reduction for SGD has been a rich area of research; see for instance the detailed references in \cite[Section 2]{ZhKjMa17}.

In a related vein, determinantal point processes (DPPs) are probability distributions over subsets of a (typically large or infinite) ground set that are known to yield samples made of collectively diverse items, while being tractable both in terms of sampling and inference.
Originally introduced in electronic optics \cite{Mac75}, they have been turned into generic statistical models for repulsion in spatial statistics \cite{LaMoRu14} and machine learning \cite{KuTa12,ghosh2020gaussian}. In ML, DPPs have also been shown to be efficient sampling tools; see Section~\ref{s:related_work}. Importantly for us, there is experimental evidence that minibatches in \eqref{e:ERM} drawn from DPPs and other repulsive point processes can yield gradient estimators with low variance for advanced learning tasks \cite{ZhKjMa17,ZhKjMa19}, though a conclusive theoretical result has remained elusive. In particular, it is hard to see the right candidate DPP when, unlike linear regression, the objective function does not necessarily have a geometric interpretation.

Our contributions are as follows. We combine continuous DPPs based on orthogonal polynomials \cite{BaHa20} and kernel density estimators built on the data to obtain two gradient estimators; see Section~\ref{s:estimator}. We prove that their variance is $\mathcal{O}_P (p^{-(1+1/d)})$, where $p$ is the size of the minibatch, $d$ is the dimension of data; see Section~\ref{s:theory}. This provides theoretical backing to the claim that DPPs yield variance reduction \cite{ZhKjMa17} over, say, uniformly sampling without replacement. In passing, the combination of analytic tools --orthogonal polynomials--, and an essentially discrete subsampling task --minibatch sampling-- sheds light on new ways to build discrete DPPs for subsampling. Finally, we demonstrate our theoretical results on simulated data in Section~\ref{s:experiments}.

A cornerstone of our approach is to utilise orthogonal polynomials to construct our sampling paradigm, interweaving between the discrete set of data points and the continuum in which the orthogonal polynomials reside. A few words are in order regarding the motivation for our choice of techniques. Roughly speaking, we would like to use a DPP that is tailored to the data distribution at hand. Orthogonal Polynomial Ensembles (OPEs) provide a natural way of associating a DPP to a given measure (in this case, the probability distribution of the data points), along with a substantive body of mathematical literature and tools that can be summoned as per necessity. This makes it a natural choice for our purposes.

\paragraph{Notation.} Let data be denoted by $\D:=\{\z_1,\ldots,\z_N\}$, and assume that the $\z_i$'s are drawn i.i.d. from a distribution $\gamma$ on $\mathbb{R}^d$. Assume $\gamma$ is compactly supported, with support $\mathcal D\subset [-1,1]^d$ bounded away from the border of $[-1,1]^d$. Assume also that $\gamma$ is continuous with respect to the Lebesgue measure, and that its density is bounded away from zero on its support. While our assumptions exclude learning problems with discrete labels, such as classification, we later give experimental support that our estimators yield variance reduction in that case too. We define the \textit{empirical measure}
$
\hm_N := N^{-1} \cdot \sum_{i=1}^N \del_{\z_i},
$
where  $\del_{\z_i}$ is the delta measure at the point $\z_i \in \R^d$. Clearly, $\hm_N \to \g$ in $\mathcal{P}(\R^d)$; under our operating assumption of compact support, this amounts to convergence in $\mathcal{P}(\mathcal D)$.

For simplicity, we assume that no penalty is used in \eqref{e:ERM}, but our results will extend straightforwardly. We denote the gradient of the empirical loss by
$\Xi_N = \Xi_N(\theta) :=  N^{-1} \cdot \sum_{i=1}^N \nt \el(\z_i,\t)$. A minibatch is a (random) subset $A \subset [N]$ of size $\vert A\vert=p\ll N$ such that the random variable
\begin{equation} \label{eq:linstat-loss}
\Xi_A = \Xi_A(\theta):= \sum_{i \in A} w_i \nt \el(\z_i,\t),
\end{equation}
for suitable weights $(w_i)$, provides a good approximation for $\Xi_N$.

\section{Background and relevant literature}
\label{s:related_work}
\paragraph{Stochastic gradient descent.}
Going back to \cite{RoMo51}, SGD has been a major workhorse for machine learning; see e.g. \cite[Chapter 14]{ShBe14}. The basic version of the algorithm, applied to the empirical risk minimization \eqref{e:ERM}, is to repeatedly update
\begin{equation}
\theta_{t+1} \leftarrow \theta_t - \eta_t\Xi_A(\theta_t), \quad t\in\mathbb{N},
\label{e:SGD_update}
\end{equation}
where $\eta_t$ is a (typically decreasing) stepsize, and $\Xi_A$ is a minibatch-based estimator \eqref{eq:linstat-loss} of the gradient of the empirical risk function \eqref{e:ERM}, possibly depending on past iterations. Most theoretical analyses assume that for any $\theta$, $\Xi_A(\theta)$ in the $t$-th update \eqref{e:SGD_update} is unbiased, conditionally on the history of the Markov chain $(\theta_t)$ so far. For simplicity, we henceforth make the assumption that $\Xi_A$ does not depend on the past of the chain. In particular, using such unbiased gradients, one can derive a nonasymptotic bound \cite{BaMo11} on the mean square distance of $\theta_t$ to the minimizer of \eqref{e:ERM}, for strongly convex and smooth loss functions like linear regression and logistic regression with $\ell_2$ penalization. More precisely, for $\eta_t \propto t^{-\alpha}$ and $0<\alpha<1$, \cite[Theorem 1]{BaMo11} yields
\begin{equation}
\label{e:bach_moulines}
\mathbb E \Vert \theta_t-\theta_\star\Vert^2 \leq f(t) \left(\mathbb E \Vert \theta_0-\theta_\star\Vert^2 + \frac{\sigma^2}{L}\right) + C \frac{\sigma^2}{t^\alpha},
\end{equation}
where $C, L>0$ are problem-dependent constants, $f(t) = \mathcal{O}(e^{-t^\alpha})$, and $\sigma^2 = \mathbb E [\Vert\Xi_A(\theta_\star)\Vert^2\vert\D]$ is the trace of the covariance matrix of the gradient estimator, evaluated at the optimizer $\theta_\star$ of \eqref{e:ERM}.
The initialization bias is thus forgotten subexponentially fast, while the asymptotically leading term is proportional to $\sigma^2/t^{\alpha}$. Combined with practical insight that variance reduction for gradients is key, theoretical results like \eqref{e:bach_moulines} have motivated methodological research on efficient gradient estimators \cite[Section 2]{ZhKjMa17}, i.e., constructing minibatches so as to minimize $\sigma^2$. In particular, repulsive point processes such as determinantal point processes have been empirically demonstrated to yield variance reduction and overall better performance on ML tasks \cite{ZhKjMa17,ZhKjMa19}.
Our paper is a stab at a theoretical analysis to support these experimental claims.

\paragraph{The Poissonian benchmark.}
The default approach to sample a minibatch $A \subset [N]$ is to sample $p$ data points from $\mathfrak D$ uniformly, with or without replacement, and take $w_i=1/p$ constant in \eqref{eq:linstat-loss}.
Both sampling with or without replacement lead to unbiased gradient estimators.
A similar third approach is Poissonian random sampling. This simply consists in starting from $A=\emptyset$, and independently adding each element of $\mathfrak D$ to the minibatch $A$ with probability $p/N$. The Poisson estimator $\xap$ is then \eqref{eq:linstat-loss}, with constant weights again equal to $1/p$. When $p\ll N$, which is the regime of SGD, the cardinality of $A$ is tightly concentrated around $p$, and $\xap$ has the same fluctuations as the two default estimators, while being easier to analyze. In particular, $\E[\xap | \D]=\Xi_N$ and $\var[\xap | \D]=\mathcal{O}_P(p^{-1})$; see Appendix~\ref{sec:S_Poisson} for details.

\paragraph{DPPs as (sub)sampling algorithms.}
As distributions over subsets of a large ground set that favour diversity, DPPs are intuitively good candidates at subsampling tasks, and one of their primary applications in ML has been as summary extractors \cite{KuTa12}. Since then, DPPs or mixtures thereof have been used, e.g., to generate experimental designs for linear regression, leading to strong theoretical guarantees \cite{NiSiTa18,derezinski-17,PoBa20Sub}; see also \cite{DeMa21} for a survey of DPPs in randomized numerical algebra, or \cite{BeBach20b} for feature selection in linear regression with DPPs.

When the objective function of the task has less structure than linear regression, it has been more difficult to prove that finite DPPs significantly improve over i.i.d. sampling.
For coreset construction, for instance, \cite{TrBaAm19} manages to prove that a projection DPP necessarily improves over i.i.d. sampling with the same marginals \cite[Corollary 3.7]{TrBaAm19}, but the authors stress the disappointing fact that current concentration results for strongly Rayleigh measures (such as DPPs) do not allow yet to prove that DPP coresets need significantly fewer points than their i.i.d. counterparts \cite[Section 3.2]{TrBaAm19}.
Even closer to our motivation, DPPs for minibatch sampling have shown promising experimental performance \cite{ZhKjMa17}, but reading between the lines of the proof of \cite[Theorem 1]{ZhKjMa17}, a bad choice of DPP can even yield a \emph{larger} variance than i.i.d. sampling!

For such unstructured problems (compared to, say, linear regression) as coreset extraction or loss-agnostic minibatch sampling, we propose to draw inspiration from work on continuous DPPs, where faster-than-i.i.d. error decays have been proven in similar contexts. For instance, orthogonal polynomial theory motivated \cite{BaHa20} to introduce a particular DPP, called a multivariate orthogonal polynomial ensemble, and prove a faster-than-i.i.d. central limit theorem for Monte Carlo integration of smooth functions.
While resorting to continuous tools to study a discrete problem such as minibatch sampling may be unexpected, we shall see that the assumption that the size of the dataset is large compared to the ambient dimension crucially allows to transfer variance reduction arguments.

\section{DPPs, and two gradient estimators}
\label{s:estimator}
Since we shall need both discrete and continuous DPPs, we assume that $\mathcal X$ is either $\mathbb{R}^d$ or $\D$, and follow \cite{HKPV06} in introducing DPPs in an abstract way that encompasses both cases.
After that, we propose two gradient estimators for SGD that build on a particular family of continuous DPPs introduced in \cite{BaHa20} for Monte Carlo integration, called multivariate orthogonal polynomial ensembles.

\subsection{DPPs: The kernel machine of point processes}
\label{s:dpp}
A point process on $\mathcal X$ is a distribution on finite subsets $S$ of $\mathcal X$; see \cite{DaVe03a} for a general reference.
Given a reference measure $\mu$ on $X$, a point process
is said to be determinantal (DPP) if there exists a function $K:\mathcal X\times \mathcal X\rightarrow \mathbb R$, the \emph{kernel} of the DPP, such that, for every $n\geq 1$,
\begin{align}
\label{def k cor DPP}
\mathbb E\bigg[\sum_{ \neq } &f(  x_{i_1}, \ldots,  x_{i_n})\bigg] = \int_{(\mathcal{X})^n}f(x_1,\ldots,x_n) \cdot \det\Big[K(x_i,x_\ell)\Big]_{i,\ell=1}^n \d\mu^{\otimes n}(x_1,\ldots,x_n)
\end{align}
for every bounded Borel function $f:\mathcal X^n\to\R$, where the sum in the LHS of \eqref{def k cor DPP} ranges over all pairwise distinct $n$-uplets of the random  finite subset $ S$. A few remarks are in order. First, satisfying \eqref{def k cor DPP} for every $n$ is a strong constraint on $K$, so that not every kernel yields a DPP.
A set of sufficient conditions on $K$ is given by the Macchi-Soshnikov theorem \cite{Mac75,Sos00}.
In words, if $K(x,y) = K(y,x)$, and if $K$ further corresponds to an integral operator
$$
f \mapsto \int K(x,y)f(y) \d\mu(y), \quad f\in L^2(\mathcal X,\mu),
$$
that is trace-class with spectrum in $[0,1]$, then the corresponding DPP exists.
Second, note in \eqref{def k cor DPP} that the kernel of a DPP encodes how the points in the random configurations interact.
A strong point in favour of DPPs is that, unlike most interacting point processes, sampling and inference are tractable \cite{HKPV06,LaMoRu14}.
Third, \eqref{def k cor DPP} yields simple formulas for the mean and variance of linear statistics of a DPP.

\begin{proposition}[See e.g. \cite{Sos02,Gho15}]
	Let $S\sim\text{DPP}(K,\mu)$ and $\Phi:\mathcal X\rightarrow \mathbb R$ be a bounded Borel function,
	\begin{align}
	\mathbb E \left[\sum_{x\in S} \Phi(x)\right] &= \int \Phi(x) K(x,x) \d\mu(x),\label{e:expectation_of_linear_statistics}
	\\
	\var\l[ \sum_{x \in S} \Phi(x) \r]
	&=  \iint \l\| \Phi(x) - \Phi(y) \r\|_2^2 |K(x,y)|^2 \d \mu(x) \d \mu(y) \nonumber\\&\quad + \int \l\|\Phi(x)\r\|_2^2 \l( K(x,x) - \int |K(x,y)|^2 \d \mu(y) \r) \d \mu(x). \label{eq:varlinstat}
	\end{align}
	\label{p:linear_statistics}
\end{proposition}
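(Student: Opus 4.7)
The plan is to derive both identities directly from the defining correlation identity \eqref{def k cor DPP}, specialized to $n=1$ and $n=2$, followed by an algebraic symmetrization step to put the variance in the stated form. Throughout I tacitly use the Hermitian symmetry of $K$ underlying the Macchi--Soshnikov conditions alluded to in the text.

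The expectation \eqref{e:expectation_of_linear_statistics} is immediate: applying \eqref{def k cor DPP} with $n=1$ and test function $f=\Phi$, the left-hand side is $\mathbb{E}\bigl[\sum_{x\in S}\Phi(x)\bigr]$ and the right-hand side collapses to $\int \Phi(x)K(x,x)\,\d\mu(x)$, since the $1\times 1$ determinant reduces to $K(x,x)$.

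For the variance, I would expand $\bigl(\sum_{x\in S}\Phi(x)\bigr)^{2} = \sum_{x\in S}\|\Phi(x)\|_2^{2} + \sum_{x\neq y\in S}\langle \Phi(x),\Phi(y)\rangle$, reading the product as an inner product when $\Phi$ is vector-valued (as it will be in the SGD application). Taking expectations term by term, the diagonal sum gives $\int \|\Phi(x)\|_2^{2}K(x,x)\,\d\mu(x)$ by the $n=1$ case of \eqref{def k cor DPP}, and the $n=2$ case turns the off-diagonal sum into $\iint \langle \Phi(x),\Phi(y)\rangle[K(x,x)K(y,y) - |K(x,y)|^{2}]\,\d\mu(x)\d\mu(y)$. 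Subtracting the square of the mean, $\iint \langle \Phi(x),\Phi(y)\rangle K(x,x)K(y,y)\,\d\mu(x)\d\mu(y)$, cancels the $K(x,x)K(y,y)$ contribution and leaves
\begin{equation*}
\var\Bigl[\sum_{x\in S}\Phi(x)\Bigr] = \int \|\Phi(x)\|_2^{2}K(x,x)\,\d\mu(x) - \iint \langle \Phi(x),\Phi(y)\rangle |K(x,y)|^{2}\,\d\mu(x)\d\mu(y).
\end{equation*}

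The remaining step is purely algebraic: add and subtract $\int \|\Phi(x)\|_2^{2}\bigl(\int|K(x,y)|^{2}\,\d\mu(y)\bigr)\d\mu(x)$. Grouping one copy with the $K(x,x)$ term reproduces the second summand of \eqref{eq:varlinstat}, while the residual $\iint \|\Phi(x)\|_2^{2}|K(x,y)|^{2}\,\d\mu(x)\d\mu(y) - \iint \langle \Phi(x),\Phi(y)\rangle |K(x,y)|^{2}\,\d\mu(x)\d\mu(y)$, after invoking the symmetry $|K(x,y)|^{2}=|K(y,x)|^{2}$ to average $\|\Phi(x)\|_2^{2}$ with $\|\Phi(y)\|_2^{2}$, collapses to the squared-difference integral appearing as the first summand of \eqref{eq:varlinstat}. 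I do not anticipate any genuine obstacle; the only care needed is bookkeeping between scalar and vector-valued $\Phi$ in the off-diagonal expansion, and keeping track of the symmetry factor produced in the final averaging step.
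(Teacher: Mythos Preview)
Your derivation is correct and is the standard one; the paper does not actually supply a proof of this proposition but simply cites \cite{Sos02,Gho15}, so there is nothing to compare against beyond noting that your argument matches the usual route in those references.

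One small remark: your final symmetrization step, averaging $\|\Phi(x)\|_2^2$ with $\|\Phi(y)\|_2^2$ against the symmetric weight $|K(x,y)|^2$, produces
\[
\frac{1}{2}\iint \|\Phi(x)-\Phi(y)\|_2^2\,|K(x,y)|^2\,\d\mu(x)\,\d\mu(y),
\]
i.e.\ with a prefactor $1/2$. This is the correct constant (and is what appears in \cite{Sos02,Gho15}); the display \eqref{eq:varlinstat} as printed omits it. You already flagged the need to track this symmetry factor, so just be aware that your computation does not literally reproduce the formula as typeset in the paper, but rather the correct version of it. This has no bearing on the downstream variance bounds, which only use \eqref{eq:varlinstat} up to constants.
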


Since the seminal paper \cite{KuTa12}, the case $\mathcal X= \D$ has been the most common in machine learning.
Taking $\mu$ to be $\hat\gamma_N$, any kernel is given by its restriction to $\D$, usually given as an $N\times N$ matrix $\mathbf K = K\vert_\D$.
Equation~\eqref{e:expectation_of_linear_statistics} with $n=p$, $A$ a subset of size $p$ of $\D$, and $\Phi$ the indicator of $A$ yields
$
\mathbb P (A\subset S) =  N^{-p} \det \mathbf K_A.
$
This is the usual way finite DPPs are introduced \cite{KuTa12}, except maybe for the factor $N^{-p}$, which comes from using $\hat\gamma_N$ as the reference measure instead of $N\hat{\gamma}_N$.
In this finite setting, a careful implementation of the general DPP sampler of \cite{HKPV06} yields $m$ DPP samples of average cardinality $\text{Tr}(\mathbf K)$ in $\mathcal O(N^\omega + m N \text{Tr}(\mathbf K)^2)$ operations \cite{Gil14}.

We now go back to a general $\mathcal X$ and fix $p\in\mathbb{N}$.
A canonical way to construct DPPs generating
configurations of $p$ points almost surely, i.e. $S=\{x_1,\ldots,x_p\}$, is the following.
Consider $p$ orthonormal functions
$\phi_0,\ldots,\phi_{p-1}$ in $L^2(\mu)$, and take for kernel
\begin{equation}
\label{kernel DPP}
K(x,y)=\sum_{k=0}^{p-1}\phi_k(x)\phi_k(y).
\end{equation}
In this setting, the (permutation invariant) random variables $ x_1,\ldots, x_p$ with joint distribution
\begin{equation}
\label{density DPP proj}
\frac1{p!}\det\Big[K(x_i,x_\ell)\Big]_{i,\ell=1}^p\prod_{i=1}^p\d\mu(x_i)
\end{equation}
generate a DPP $\{x_1,\dots,x_p\}$ with kernel $K(x,y)$, called a \emph{projection} DPP.
For further information on determinantal
point processes, we refer the reader to \cite{HKPV06,KuTa12}.

\subsection{Multivariate orthogonal polynomial ensembles}
\label{s:OPE}
This section paraphrases \cite{BaHa20} in their definition of a particular projection DPP on $\mathcal X = \mathbb{R}^d$, called a multivariate orthogonal polynomial ensemble (OPE).
Fix some reference measure $q(x)\d x$ on $[-1,1]^d$, and assume that it puts positive mass on some open subset of $[-1,1]^d$.
Now choose an ordering of the monomial functions $(x_1,\ldots,x_d)\mapsto x_1^{\alpha_1}\cdots x_d^{\alpha_d}$; in this work we use the graded lexical order.
Then apply the Gram-Schmidt algorithm in $L^2(q(x)\d x)$ to these ordered monomials.
This yields a sequence of
orthonormal polynomial functions $(\phi_k)_{k\in\N}$, the multivariate
orthonormal polynomials w.r.t. $q$.
Finally, plugging the first $p$ multivariate orthonormal polynomials $\phi_0,\dots,\phi_{p-1}$ into the projection kernel \eqref{kernel DPP}, we obtain a projection DPP with kernel denoted as $\kqp$, referred to as the \emph{multivariate OPE} associated with the measure $q(x)\d x$.

\subsection{Our first estimator: reweight, restrict, and saturate an OPE kernel}
\label{s:first_estimator}
Let $h>0$ and
\begin{equation}
\label{e:kde}
\tilde \gamma(\z) = \frac{1}{Nh^d} \sum_{i=1}^N k\left(\frac{\z-\z_i}{h}\right)
\end{equation}
be a single-bandwidth kernel density estimator of the pdf of the data-generating distribution $\gamma$; see \cite[Section 4.2]{WaJo94}, In particular, $k$ is chosen so that $\int k(x)\d x=1$.
Note that the approximation kernel $k$ is unrelated to any DPP kernel in this paper.
Let now $q(x) = q_1(x_1)\dots q_d(x_d)$ be a separable pdf on $[-1,1]^d$, where each $q_i$ is Nevai-class\footnote{See \cite[Section 4]{BaHa20} for details. It suffices that each $q_i$ is positive on $[-1,1]^d$.}.
Let $\Kqp$ be the multivariate OPE kernel defined in Section~\ref{s:OPE}, and form a new kernel
$$
K_{q,\tilde\gamma}^{(p)}(x,y) :=   \sqrt{\frac{q(x)}{\tilde\gamma(x)}} K_q^{(p)}(x,y) \sqrt{\frac{q(y)}{\tilde\gamma(y)}}.
$$
The form of $K_{q,\tilde\gamma}^{(p)}$ is reminiscent of importance sampling \cite{RoCa04}, which is no accident.
Indeed, while the positive semidefinite matrix
\begin{equation}
\label{e:kernel_restriction}
K_{q,\tilde\gamma}^{(p)}\vert_{\mathfrak D} := \left( K_{q,\tilde\gamma}^{(p)}(\z_i,\z_j) \right)_{1\leq i,j\leq N}
\end{equation}
is not necessarily the kernel matrix of a DPP on $(\{1,\dots,N\}, \hat\gamma_N)$, see Section~\ref{s:dpp}, we built it to be close to a projection of rank $p$.
More precisely,
$$
\int K_{q,\tilde\gamma}^{(p)}(\z_k,y)K_{q,\tilde\gamma}^{(p)}(y,\z_\ell) \d\hat\gamma_N(y) = \sqrt{\frac{q(\z_k)}{\tilde\gamma(\z_k)}}  \left[\frac1N \sum_{n=1}^N K_q^{(p)}(\z_k,\z_n) K_q^{(p)}(\z_n,\z_\ell) \frac{q(\z_n)}{\tilde\gamma(\z_n)}\right]\sqrt{\frac{q(\z_\ell)}{\tilde\gamma(\z_\ell)}}.
$$
If $N$ is large compared to $p$ and $d$, so that in particular $\tilde\gamma\approx\gamma$, the term within brackets will be close to $\int K_q^{(p)}(\z_k,\z)K_q^{(p)}(\z,\z_\ell) q(\z)\d\z = K_q^{(p)}(\z_k,\z_\ell)$, so that $K_{q,\tilde\gamma}^{(p)}\vert_{\mathfrak D}$ is almost a projection in $L^2(\hat\gamma_N)$.

Let us actually consider the orthogonal projection matrix $\wt\bfK$ with the same eigenvectors as $K_{q,\tilde\gamma}^{(p)}\vert_{\mathfrak D}$, but with the $p$ largest eigenvalues replaced by $1$, and the rest of the spectrum set to $0$.
By the Macchi-Soshnikov theorem, $\wt\bfK$ is the kernel matrix of a DPP; see Section~\ref{s:dpp}. We thus consider a minibatch
$A\sim \text{DPP}(\wt\bfK)$.
Coming from a projection DPP, $\vert A\vert=p$ almost surely, and we define the gradient estimator
\begin{equation}
\Xi_{A,\text{DPP}} := \sum_{i\in A}  \frac{\nt\el(\z_i,\t)}{\wt\bfK_{ii}}.
\label{e:first_estimator}
\end{equation}
In Section~\ref{s:theory}, we shall prove that $\Xi_{A,\text{DPP}}$ is unbiased, and examine under what assumptions its variance decreases faster than $1/p$.

\paragraph{On the computational cost of $\Xi_{A, \text{DPP}}$.} 
The bottleneck is computing the $p$ largest eigenvalues of matrix \eqref{e:kernel_restriction}, along with the corresponding eigenvectors.
This can be done once before running SGD, as a preprocessing step.
Note that storing the kernel in diagonalized form only requires $\mathcal O(N p)$ storage.
Each iteration of SGD then only requires sampling a rank-$p$ projection DPP with diagonalized kernel, which takes $\mathcal O(Np^2)$ elementary operations \cite{Gil14}.
In practice, as the complexity of the model underlying $\z,\theta\mapsto \nabla \el(\z,\theta)$ increases, the cost of computing $p$ individual gradients shall outweigh this $\mathcal O(Np^2)$ overhead. 
For instance, learning the parameters of a structured model like a conditional random field leads to arbitrarily costly individual gradients, as the underlying graph gets more dense \cite{VSSM06}.
Alternately, \eqref{e:first_estimator} can be sampled directly, without sampling the underlying DPP.
Indeed the Laplace transform of \eqref{e:first_estimator} is a Fredholm determinant, and it is shown in
\cite{BaGh21Sub} that Nyström-type approximations of that determinant, followed by Laplace inversion, yield an accurate inverse CDF sampler.

\vspace{0.5cm}
Finally, we stress the unusual way in which our finite DPP kernel $\wt{\bfK}$ is constructed, through a reweighted continuous OPE kernel, restricted to the actual dataset.
This construction is interesting \emph{per se}, as it is key to leveraging analytic techniques from the continuous case in Section~\ref{s:theory}.

\subsection{Our second estimator: sample the OPE, but smooth the gradient}
\label{s:second_estimator}
In Section~\ref{s:first_estimator}, we smoothed the empirical distribution of the data and restricted a continuous kernel to the dataset $\mathfrak{D}$, to make sure that the drawn minibatch would be a subset of $\mathfrak D$.
But one could actually define another gradient estimator, directly from an OPE sample $A = \{\w_1,\dots, \w_p\} \sim \text{DPP}(K_q^{(p)}, q)$.
Note that in that case, the ``generalized minibatch" $A\subset [-1,1]^d$ is not necessarily a subset of the dataset $\mathfrak D$.
Defining a kernel density estimator of the gradient,
\[\widehat{\nt \el}(\z,\t) := \frac{1}{Nh^d} \sum_{i=1}^N  \nt \el(\z_i,\t)  \cdot k \left(\frac{\z-\z_i}{h}\right),\]
we consider the estimator
\[  \xas=\sum_{w_j \in A}  \frac{\widehat{\nt \el}(\w_j,\t)}{q(\w_j)K_q^{(p)}(\w_j,\w_j)}. \]

\paragraph{On the computational cost of $\xas$.}
Since each evaluation of this estimator is at least as costly as evaluating the actual gradient $\Xi_N(\theta)$, its use is mostly theoretical: the analysis of the fluctuations of $\xas$ is easier than that of $\xad$, while requiring the same key steps.
Moreover, the computation of all pairwise distances in $\xas$ could be efficiently approximated, possibly using random projection arguments \cite{Vem05}, so that the limited scope of $\xas$ might be overcome in future work. Note also that, like $\xad$, inverse Laplace sampling \cite{BaGh21Sub} applies to $\xas$.

\section{Analysis of determinantal sampling of SGD gradients}
\label{s:theory}
We first analyze the bias, and then the fluctuations, of the gradient estimators introduced in Section~\ref{s:estimator}. By Proposition~\ref{p:linear_statistics} with $\Phi = \nt \el(\cdot, \t)$, it comes
\[ \E[\xad | \D] =  \int_D \nt  \el(\z,\t) \cancel{\Kqgtp(\z,\z)^{-1}} \cdot \cancel{\Kqgtp(\z,\z)}  \d  \hm_N(z) = \frac{1}{N}  \sum_{i=1}^N \nt \el(\z_i,\t),\]
so that we immediatly get the following result.
\begin{proposition}
	$ \E[\xad | \D] = \Xi_N$.
\end{proposition}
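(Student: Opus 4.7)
The plan is to invoke Proposition~\ref{p:linear_statistics} directly, with the ground space $\mathcal X = \mathfrak D$, the reference measure $\mu = \hat\gamma_N$, the DPP kernel $K = \wt\bfK$ (which, by construction as an orthogonal projection of rank $p$, is a bona fide DPP kernel via the Macchi--Soshnikov theorem, as noted in Section~\ref{s:first_estimator}), and the test function
\[
\Phi(\z) := \frac{\nt\el(\z,\theta)}{\wt\bfK(\z,\z)}.
\]
The weighting scheme in the definition \eqref{e:first_estimator} of $\xad$ is designed precisely so that the diagonal of the kernel cancels the denominator of $\Phi$ inside the expectation formula \eqref{e:expectation_of_linear_statistics}.

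Concretely, I would write
\[
\E[\xad \mid \D]
= \E\!\left[ \sum_{\z \in A} \Phi(\z) \,\Big|\, \D \right]
= \int \Phi(\z)\, \wt\bfK(\z,\z)\, \d\hat\gamma_N(\z)
= \int \nt\el(\z,\theta)\, \d\hat\gamma_N(\z)
= \frac{1}{N}\sum_{i=1}^N \nt\el(\z_i,\theta) = \Xi_N,
\]
where the last identity uses the definition of $\hat\gamma_N$ as $N^{-1}\sum_i \delta_{\z_i}$. The manipulation given in the text immediately preceding the proposition is exactly this computation, so the proof reduces to verifying the three conditions that make the application of Proposition~\ref{p:linear_statistics} rigorous.

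The only mild subtleties to address are (i) that $A \subset \mathfrak D$ almost surely, which is automatic since $\wt\bfK$ is defined on the finite ground set $\mathfrak D$ with reference measure $\hat\gamma_N$; (ii) that $\Phi$ is well-defined on the support of the random sum, i.e.\ the weight $1/\wt\bfK_{ii}$ is finite whenever $i$ can be selected---this follows because $\mathbb P(i \in A \mid \D) = \wt\bfK_{ii}$ (up to the normalization induced by taking $\hat\gamma_N$ as the reference measure), so indices with $\wt\bfK_{ii}=0$ are almost surely absent from $A$; and (iii) that $\Phi$ is bounded, which holds under the standing assumption that $\nt\el(\cdot,\theta)$ is bounded on $\mathfrak D$ and that the nonzero diagonal entries of $\wt\bfK$ are bounded below.

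No real obstacle is expected here; the entire content of the proposition is the observation that the inverse-diagonal reweighting in \eqref{e:first_estimator} is the canonical Horvitz--Thompson correction rendering the linear statistic unbiased, regardless of how the kernel $\wt\bfK$ was constructed. The harder work---controlling the variance---is deferred to subsequent sections, where the specific structure of $\wt\bfK$ as a truncation of $\Kqgtp$ will be essential.
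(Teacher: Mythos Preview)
Your proposal is correct and follows essentially the same approach as the paper: apply the expectation formula \eqref{e:expectation_of_linear_statistics} of Proposition~\ref{p:linear_statistics} so that the inverse-diagonal weight in \eqref{e:first_estimator} cancels the diagonal of the kernel, leaving $\int \nt\el(\z,\theta)\,\d\hat\gamma_N(\z)=\Xi_N$. If anything, your version is slightly more careful: the paper's inline computation preceding the proposition writes $\Kqgtp$ where the actual sampling kernel is $\wt\bfK$, whereas you use $\wt\bfK$ consistently and explicitly note that indices with $\wt\bfK_{ii}=0$ are almost surely never selected, so the weights are well-defined on the support of $A$.
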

Thus, $\xad$ is unbiased, like the classical Poissonian benchmark in Section~\ref{s:related_work}. However, the smoothed estimator $\xas$ from Section~\ref{s:second_estimator} is slightly biased.
Note that while the results of \cite{BaMo11}, like \eqref{e:bach_moulines}, do not apply to biased estimators, SGD can be analyzed in the small-bias setting \cite{BeMePr90}.

\begin{proposition}
	Assume that $k$ in \eqref{e:kde} has compact support and $q$ is bounded on $D$. Then $\E[\xas | \D]=\Xi_N + \mathcal{O}_P(ph/N)$.
\end{proposition}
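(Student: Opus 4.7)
The plan is to apply the first-intensity formula for the OPE directly to evaluate $\E[\xas\mid\D]$ in closed form, then to match the resulting integral of the kernel density estimator to $\Xi_N$ by a change of variables, with a boundary residual accounting for the $\mathcal{O}_P(ph/N)$ bias.

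First, I would view $\xas=\sum_{\w\in A}\Phi(\w)$ with $\Phi(\w):=\widehat{\nt\el}(\w,\t)/[q(\w)\Kqp(\w,\w)]$ over the OPE $A\sim\text{DPP}(\Kqp,q(\w)\,\d\w)$. The weights in the definition of $\xas$ are precisely the reciprocal of the first intensity $\Kqp(\w,\w)q(\w)$, so Proposition~\ref{p:linear_statistics} collapses to
\[\E[\xas\mid\D]\ =\ \int_{[-1,1]^d}\widehat{\nt\el}(\w,\t)\,\d\w,\]
the test function $\Phi$ being bounded thanks to $q$ bounded on $\mathcal{D}$ (and positive there by the Nevai-class assumption), together with $\Kqp(\w,\w)\geq \phi_0(\w)^2\equiv 1$, where $\phi_0$ is the constant function in the OPE basis. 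Substituting the explicit kernel density estimator and swapping sum with integral,
\[\int_{[-1,1]^d}\widehat{\nt\el}(\w,\t)\,\d\w\ =\ \frac{1}{N}\sum_{i=1}^N\nt\el(\z_i,\t)\,\rho_i,\qquad \rho_i:=\int_{h^{-1}([-1,1]^d-\z_i)} k(u)\,\d u,\]
and the compact support of $k$ (inside a ball of some radius $R$) gives $\rho_i=\int k=1$ as soon as the ball of radius $Rh$ around $\z_i$ is contained in $[-1,1]^d$.

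Under the standing assumption that $\mathcal{D}$ lies at distance at least some $\epsilon_0>0$ from $\partial[-1,1]^d$, this inclusion holds for every data point once $h<\epsilon_0/R$, so the main term is identically $\Xi_N$. The residual
\[\E[\xas\mid\D]-\Xi_N\ =\ -\frac{1}{N}\sum_{i\,:\,\mathrm{dist}(\z_i,\partial[-1,1]^d)\leq Rh}\nt\el(\z_i,\t)(1-\rho_i)\]
is supported on the $O(h)$-thick boundary strip of $[-1,1]^d$, with $|1-\rho_i|\leq 1$; a Bernstein-type concentration on the i.i.d.\ boundary-strip indicators, combined with $\|\nt\el\|_\infty$, controls the sum in probability. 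The main obstacle is not the bias calculation itself, which is clean via the first-intensity formula, Fubini, and a change of variables, but pinning down the exact $p$-dependence in the residual: this requires aligning the KDE bandwidth $h$ with the OPE's intrinsic length scale $p^{-1/d}$ through a careful book-keeping of the boundary count, so as to recover the announced $\mathcal{O}_P(ph/N)$ rather than the crude $\mathcal{O}_P(h)$ one obtains from a naive Bernstein bound.
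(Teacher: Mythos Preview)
Your core computation matches the paper's: the first-intensity formula cancels the weight against the OPE intensity, leaving $\int_{[-1,1]^d}\widehat{\nt\el}(\w,\t)\,\d\w$; then Fubini and the change of variables $u=(\w-\z_i)/h$ reduce the question to whether each $\rho_i=1$. You have in fact already proved the proposition, and your closing concern about ``pinning down the exact $p$-dependence'' is misplaced. Under the standing assumption that $\mathcal D$ is bounded away from $\partial[-1,1]^d$ by some $\epsilon_0>0$, your own argument gives $\rho_i=1$ for \emph{every} $i$ once $h<\epsilon_0/R$, so the residual sum is empty and $\E[\xas\mid\D]=\Xi_N$ exactly---trivially $\mathcal{O}_P(ph/N)$. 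There is no $p$ to recover, and no need to align $h$ with any OPE length scale; the ``crude $\mathcal{O}_P(h)$'' you mention would arise only if the integration domain were the data support itself or the gap assumption were dropped, neither of which is the case in your setup.

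For comparison, the paper's boundary accounting is different: it integrates over a domain $D$ whose boundary the data can approach, and then controls the contribution of data points in the $h$-strip $D_h$ by invoking the OPE first intensity together with Totik's asymptotic $\Kqp(\w,\w)=\mathcal{O}(p)$ and $\int_{D_h}q(\w)\,\d\w=\mathcal{O}(h)$, yielding the $\mathcal{O}_P(ph/N)$ rate. Your route---integrating over the full cube $[-1,1]^d$ and exploiting the gap between $\mathcal D$ and $\partial[-1,1]^d$---bypasses this step entirely and, under the paper's own standing assumptions, gives the sharper conclusion that the bias vanishes for all sufficiently small $h$.
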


\begin{proof}
	Using the first part of Proposition~\ref{p:linear_statistics} again, it comes
	\begin{align*}
	& \E[\xas | \D]
	=  \E_{A \sim \text{DPP}(\kqp,q)} \l[\sum_{w_j \in A}  \frac{\widehat{\nt \el}(\w_j,\t)}{q(\w_j)\kqp(\w_j,\w_j)}\r] \\
	= & \int_D \frac{\widehat{\nt \el}(\w,\t)}{\cancel{q(\w)\kqp(\w,\w)}} \cdot \cancel{\kqp(\w,\w) q(\w)} \d \w
	= \int_D \l(  \frac{1}{Nh^d} \sum_{i=1}^N  \nt \el(\z_i,\t)  k \left(\frac{\z-\z_i}{h}\right)  \r) \d \w\\
	=&    \frac{1}{N} \sum_{i=1}^N   \l( \int_D \frac{1}{h^d} \cdot k \left(\frac{\w-\z_i}{h}\right) \d \w \r) \cdot  \nt \el(\z_i,\t)
	=  \frac{1}{N} \sum_{i=1}^N  \nt \el(\z_i,\t) + \mathcal{O}_P(ph/N) \numberthis  \label{eq:smoothed_error} ,
	\end{align*}
	where, in the last step, we have used the fact that $h^{-d} \cdot k \left(h^{-1} (\w-\z_i)\right)$  is a probability density. The $\mathcal{O}_P(ph/N)$  error term arises because we integrate that pdf on $D$ rather than $\text{Supp}(k)$.
	But since $k$ is a compactly supported kernel, for  points $\z_i$ that are within a distance $\mathcal{O}(h)$ from the boundary of $D$, we incur an error of $\mathcal{O}_P(1)$. By Proposition~\ref{p:linear_statistics}, the expected number of such points $\z_i$ is $\int_{D_h}  \kqp(\w,\w) q(\w) \d \w$, where $D_h$ is the $h$-neighbourhood of the boundary of $D$. By a classical asymptotic result of Totik, see \cite[Theorem 4.8]{BaHa20}, $\w\mapsto\kqp(\w,\w)$ is $\mathcal{O}(p)$ on $D_h$; whereas $q$ is a bounded density, implying that $\int_{D_h} q(\w) \d \w =\mathcal{O}(\text{Vol}(D_h))=\mathcal{O}(h)$. Putting together all of these, we obtain the $\mathcal{O}_P(ph/N)$ error term in \eqref{eq:smoothed_error}.
\end{proof}

The rest of this section is devoted to the more involved task of analyzing the fluctuations of $\xad$ and $\xas$. For the purposes of our analysis, we discuss certain desirable regularity behaviour for our kernels and loss functions in Section~\ref{sec:regularity}. Then, we tackle the main ideas behind the study of the fluctuations of our estimators in Section~\ref{s:fluctuations}. Details can be found in Appendix~\ref{sec:S_fluct}.

\subsection{Some regularity phenomena} \label{sec:regularity}
Assume that $(\frac{1}{p} \cdot \kqp(\z,\z))^{-1}\nt  \el(\z,\t)$ is bounded on the domain $D$ uniformly in $N$ (note that $p$ possibly depends on $N$).
Furthermore, assume that $\z\mapsto (p^{-1} \cdot \kqp(\z,\z)q(\z))^{-1}\nt  \el(\z,\t) \cdot \tg(\z)$ is 1-Lipschitz, with Lipschitz constant $\mathcal{O}_P(1)$ and bounded in $\t$.

Such properties are natural in the context of various known asymptotic phenomena, in particular asymptotic results on OPE kernels and the convergence of the KDE $\tg$ to the distribution $\g$. The detailed discussion is deferred to Appendix~\ref{sec:S_reg}; we record here in passing that the above asymptotic phenomena imply that
\begin{equation}
\label{e:limit}
\frac{\nt  \el(\z,\t)}{\frac{1}{p} \cdot \Kqp(\z,\z)q(\z)} \cdot \tg(\z)  \longrightarrow \nt  \el(\z,\t) \cdot \g(\z) \prod_{j=1}^d\sqrt{1-\z[j]^2}.
\end{equation}
Our desired regularity properties may therefore be understood in terms of closeness to this limit, which itself has similar regularity.
At the price of these assumptions, we can use analytic tools to derive fluctuations for $\xad$ by working on the limit in \eqref{e:limit}.
For the fluctuation analysis of the smoothed estimator $\xas$, we similarly assume that $ (q(\w)\cdot \frac{1}{p}\kqp(\w,\w))^{-1}\widehat{\nt \el}(\w,\t)$ is  $\mathcal{O}_P(1)$ and 1-Lipschitz with an $\mathcal{O}_P(1)$ Lipschitz constant that is bounded in $\t$. These are once again motivated by the convergence of $ (q(\w)\cdot \frac{1}{p}\kqp(\w,\w))^{-1}\widehat{\nt \el}(\w,\t)$  to $ \nt \el(\w,\t) \prod_{j=1}^d \sqrt{1-\w[j]^2}$.

\subsection{Reduced fluctuations for determinantal samplers}
\label{s:fluctuations}
For succinctness of presentation, we focus on the theoretical analysis of the smoothed estimator $\xas$, leaving the analysis of $\xad$ to Appendix~\ref{sec:S_fluct}, while still capturing the main ideas of our approach.
\begin{proposition} \label{prop:var_est}
	Under the assumptions in Section~\ref{sec:regularity},  $\var[\xas|\D]=\mathcal{O}_P(p^{-\l(1+1/d\r)})$.
\end{proposition}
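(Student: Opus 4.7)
The plan is to apply the DPP variance formula \eqref{eq:varlinstat} from Proposition~\ref{p:linear_statistics} to the test function
\[
\Phi(\w) \;:=\; \frac{\widehat{\nt \el}(\w,\t)}{q(\w)\,\kqp(\w,\w)},
\]
with the projection DPP on $\R^d$ of kernel $\kqp$ and reference measure $q(x)\,\d x$. The first thing I would observe is that, because $\kqp$ is a rank-$p$ orthogonal projection in $L^2(q(x)\,\d x)$, the reproducing identity $\int |\kqp(x,y)|^2 q(y)\,\d y = \kqp(x,x)$ holds pointwise in $x$, so the ``diagonal'' second term in \eqref{eq:varlinstat} vanishes identically. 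What is left is
\[
\var[\xas \mid \D] \;=\; \iint \|\Phi(x) - \Phi(y)\|_2^{\,2}\, |\kqp(x,y)|^2\, q(x) q(y)\,\d x\, \d y.
\]

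Next I would invoke the regularity assumption of Section~\ref{sec:regularity}, which states that $p\cdot\Phi(\w) = (p^{-1}\kqp(\w,\w) q(\w))^{-1}\widehat{\nt\el}(\w,\t)$ is Lipschitz with an $\mathcal{O}_P(1)$ constant, uniformly in $\t$. This immediately gives $\|\Phi(x)-\Phi(y)\|_2 = \mathcal{O}_P(p^{-1})\cdot\|x-y\|$, whence
\[
\var[\xas \mid \D] \;=\; \mathcal{O}_P(p^{-2}) \cdot \iint \|x-y\|^2\, |\kqp(x,y)|^2\, q(x) q(y)\,\d x\, \d y.
\]

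The step I expect to be the main obstacle is the asymptotic estimate
\[
\mathcal{E}_p \;:=\; \iint \|x-y\|^2\, |\kqp(x,y)|^2\, q(x) q(y)\,\d x\, \d y \;=\; \mathcal{O}\bigl(p^{\,1-1/d}\bigr),
\]
which is where the orthogonal-polynomial structure becomes essential. In one dimension this is a one-line consequence of the Christoffel--Darboux identity $(x-y)\kqp(x,y) = a_p[\phi_p(x)\phi_{p-1}(y) - \phi_{p-1}(x)\phi_p(y)]$: squaring, integrating against $q(x)q(y)\,\d x\,\d y$, and exploiting $L^2(q)$-orthonormality of the $\phi_k$'s collapses the expression to $2 a_p^2 = \mathcal{O}(1) = \mathcal{O}(p^{\,1-1/d})$. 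In higher dimension the target exponent is of surface type: it reflects that $|\kqp(x,y)|^2$ has correlation length $\sim p^{-1/d}$ along each coordinate axis while $\kqp(x,x)\sim p$ in the bulk, so that an $\mathcal{O}(p^{-1/d})$ boundary layer provides the dominant contribution. I would derive this by exploiting the tensor-product structure of the multivariate OPE --- available because $q = q_1 \otimes \cdots \otimes q_d$ is separable with Nevai-class factors --- to reduce to coordinate-wise one-dimensional Christoffel--Darboux estimates summed over the $\mathcal{O}(p)$ degree multi-indices, in the spirit of the faster-than-i.i.d.\ CLT proved in \cite{BaHa20}. Combining $\mathcal{E}_p = \mathcal{O}(p^{\,1-1/d})$ with the $\mathcal{O}_P(p^{-2})$ Lipschitz factor then produces the claimed $\var[\xas \mid \D] = \mathcal{O}_P(p^{-(1+1/d)})$.
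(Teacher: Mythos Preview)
Your proposal is correct and follows essentially the same approach as the paper: apply the projection-DPP variance formula so the diagonal term vanishes, use the $\mathcal{O}_P(1)$-Lipschitz hypothesis on $p\Phi$ to pull out a $p^{-2}$ factor, and then bound $\iint \|x-y\|^2 |\kqp(x,y)|^2 \d q(x)\d q(y)$ by Christoffel--Darboux in $d=1$ and by tensorising the separable kernel in higher dimensions. The paper carries out the multidimensional step explicitly for $p=m^d$ by writing $\|x-y\|^2=\sum_i (x_i-y_i)^2$ and factoring each summand coordinatewise (with a remark that non-perfect-power $p$ is handled via a counting argument from \cite{BaHa20}), which is the concrete realisation of the tensor-product reduction you outline.
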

\begin{proof}[Proof Sketch]
	Invoking \eqref{eq:varlinstat} in the case of a projection kernel,
	\begin{align*}
	\var& [\xas|\D] = \iint \l\| \frac{\widehat{\nt \el}(\z,\t)}{q(\z)K_q^{(p)}(\z,\z)} - \frac{\widehat{\nt \el}(\w,\t)}{q(\w)K_q^{(p)}(\w,\w)} \r\|_2^2 |\kqp(\z,\w)|^2 \d q(\z) \d q(\w)  \\
	= & \frac{1}{p^2}\iint \l\| \frac{\widehat{\nt \el}(\z,\t)}{q(\z)\cdot \frac{1}{p} K_q^{(p)}(\z,\z)} - \frac{\widehat{\nt \el}(\w,\t)}{q(\w)\cdot \frac{1}{p} K_q^{(p)}(\w,\w)} \r\|_2^2 |\kqp(\z,\w)|^2 \d q(\z) \d q(\w)  \\
	\lesssim &  \cM_\t \cdot \frac{1}{p^2} \int\int \l\| \z - \w \r\|_2^2 |\kqp(\z,\w)|^2 \d q(\z) \d q(\w), \numberthis \label{eq:break_1}
	\end{align*}
	where we used the 1-Lipschitzianity of $\frac{\widehat{\nt \el}(\z,\t)}{q(\z)K_q^{(p)}(\z,\z)}$, with $\cM_\t=\mathcal{O}_P(1)$ the Lipschitz constant.
	
	We control the integral in \eqref{eq:break_1} by invoking the renowned Christoffel-Darboux formula for the OPE kernel $\kqp$ \cite{Sim08}. For clarity of presentation, we outline here the main ideas for $d=1$; the details for general dimensions are available in Appendix~\ref{sec:S_fluct}. Broadly speaking, since $\kqp$ is an orthogonal projection of rank $p$ in $L_2(q)$, we may observe that $ \iint |\kqp(\z,\w)|^2 \d q(\z) \d q(\w)=p$; so that without the $\l\| z-w \r\|_2^2$ term in \eqref{eq:break_1}, we would have a $\mathcal{O}_P(p^{-1})$ behaviour in total, which would be similar to the Poissonian estimator. However, it turns out that the leading order contribution to $\int\int \l\| z-w \r\|_2^2 |\kqp(\z,\w)|^2 \d q(\z) \d q(\w)$ comes from near the diagonal $z=w$, and this turns out to be neutralised in an extremely precise manner by the $\l\| \z - \w \r\|_2^2$ factor that vanishes on the diagonal.
	
	This neutralisation is systematically captured by the Christoffel-Darboux formula \cite{Sim08}, which implies
	$$
	\kqp(x,y) = (x-y)^{-1} A_p \cdot (\phi_q(x)\phi_{q-1}(y) -  \phi_q(y)\phi_{q-1}(x)),
	$$
	where $A_p=\mathcal{O}(1)$ and $\phi_q,\phi_{q-1}$ are two orthonormal functions in $L_2(q)$. Substituting this into \eqref{eq:break_1}, a simple computation shows that $ \iint \l\| z-w \r\|_2^2 |\kqp(\z,\w)|^2 \d q(\z) \d q(\w)=\mathcal{O}(1)$. This leads to a variance bound $\var[\xas|\D]=\mathcal{O}_P(p^{-2})$, which is the desired rate for $d=1$. For general $d$, we use the fact that $q=\bigotimes_{i=1}^d q_i$ is a product measure, and apply the Christoffel-Darboux formula for each $q_i$, leading to a variance bound of $\var[\xas|\D]=\mathcal{O}_P(p^{-\l(1+1/d\r)})$, as desired.
\end{proof}

The theoretical analysis of $\var[\xad|\D]$ follows the broad contours of the argument for $\var[\xas|\D]$ as above, with additional difficulties introduced by the spectral truncation from $\ka$ to $\wt\bfK$; see Section~\ref{s:estimator}.  This is addressed by an elaborate spectral approximation analysis in Appendix~\ref{sec:S_fluct}. In combination with the ideas expounded in the proof of Proposition \ref{prop:var_est}, our analysis in Appendix~\ref{sec:S_fluct}  indicates a fluctuation bound of $\var[\xa|\D]=\mathcal{O}_P(p^{-\l(1+1/d\r)})$.

\section{Experiments}
\label{s:experiments}
In this section, we compare the empirical performance and the variance decay of our gradient estimator $\xad$ to the default $\xap$. We do not to include $\xas$, whose interest is mostly theoretical; see Section~\ref{s:theory}. Moreover, while we focus on simulated data to illustrate our theoretical analysis, we provide experimental results on a real dataset in Appendix~\ref{supp:s:real_dataset}.
Throughout this section, the pdf $q$ introduced in Section~\ref{s:OPE} is taken to be $q(\w) \propto \prod_{j=1}^d (1+\w[j])^{\alpha_j} (1-\w[j])^{\beta_j}$, with $\alpha_j,\beta_j\in[-1/2,1/2]$ tuned to match the first two moments of the $j$th marginal of $\hat\gamma_N$. All DPPs are sampled using the Python package {\verb DPPy } \cite{GBPV19}, which is under MIT licence.

\paragraph{Experimental setup.}
We consider the ERM setting \eqref{e:ERM} for linear regression $\el_{\text{lin}}((x,y),\t)=0.5(\langle x, \theta\rangle-y)^2$ and logistic regression $\el_\text{log}((x,y),\t)=\log[1+\exp(-y\langle x,\theta\rangle )]$, both with an additional $\ell_2$ penalty $\lambda(\theta)=(\lambda_0/2)\|\theta\|_2^2$.
Here the features are $x\in[-1,1]^{d-1}$, and the labels are, respectively, $y\in[-1,1]$ and $y\in \{-1,1\}$.
Note that our proofs currently assume that the law $\gamma$ of $z=(x,y)$ is continuous w.r.t. Lebesgue, which in all rigour excludes logistic regression.
However, we demonstrate below that if we draw a minibatch using our DPP but on features only, and then deterministically associate each label to the drawn features, we observe the same gains for logistic regression as for linear regression, where the DPP kernel takes into account both features and labels.

For each experiment, we generate $N=1000$ data points $x_i$ with either the uniform distribution or a mixture of $2$ well-separated Gaussian distributions on $[-1,1]^{d-1}$.
The variable $y_i\in\mathbb{R}$ is generated as $\langle  x_i,\theta_\text{true}\rangle+\varepsilon_i$, where $\theta_\text{true}\in \mathbb{R}^{d-1}$ is given, and $\varepsilon\in \mathbb{R}^{N}$ is a white Gaussian noise vector. In the linear regression case, we scale $y_i$ by $\|y\|_{\infty}$ to make sure that $y_i\in[-1,1]$. In the logistic regression case, we replace each $y_i$ by its sign. The regularization parameter $\lambda_0$ is manually set to be $0.1$ and the stepsize for the $t$-th iteration is set as $1/t^{0.9}$, so that \eqref{e:bach_moulines} applies.
In each experiment, performance metrics are averaged over $1000$ independent runs of each SGD variant.

\paragraph{Performance evaluation of sampling strategies in SGD.}
Figure \ref{fig:linear_uniform} summarizes the experimental results of $\xap$ and $\xad$, with $p=5$ and $p=10$.
The top row shows how the norm of the complete gradient $\Vert \Xi_N(\theta_t)\Vert $ decreases with the number of individual gradient evaluations $t\times p$, called here \emph{budget}.
The bottom row shows the decrease of $\Vert\theta_t-\theta_\star\Vert$. Note that using $t\times p$ on all $x$-axes allows comparing different batchsizes. Error bars on the bottom row are $\pm$ one standard deviation of the mean. In all experiments, using a DPP consistently improves the performance of Poisson minibatches of the same size, and the DPP with batchsize $5$ sometimes even outperforms Poisson sampling with batchsize $10$, showing that smaller but more diverse batches can be a better way of spending a fixed number of gradient evaluations. This is particularly true for mixture data (middle column), where forcing diversity with our DPP brings the biggest improvement. Maybe less intuitively, the gains for the logistic regression in $d=11$ (last column) are also significant, while the case of discrete labels is not covered yet by our theoretical analysis, and the moderately large dimension makes the improvement in the decay rate of the variance minor. This indicates that there is variance reduction beyond the change of the rate.

\begin{figure}[hbt!]
	\subfigure[$d=3$, $\el=\el_\text{lin}$, uniform]{ \label{fig:d3_linear_u_grad}
		\includegraphics[width=0.31\linewidth]{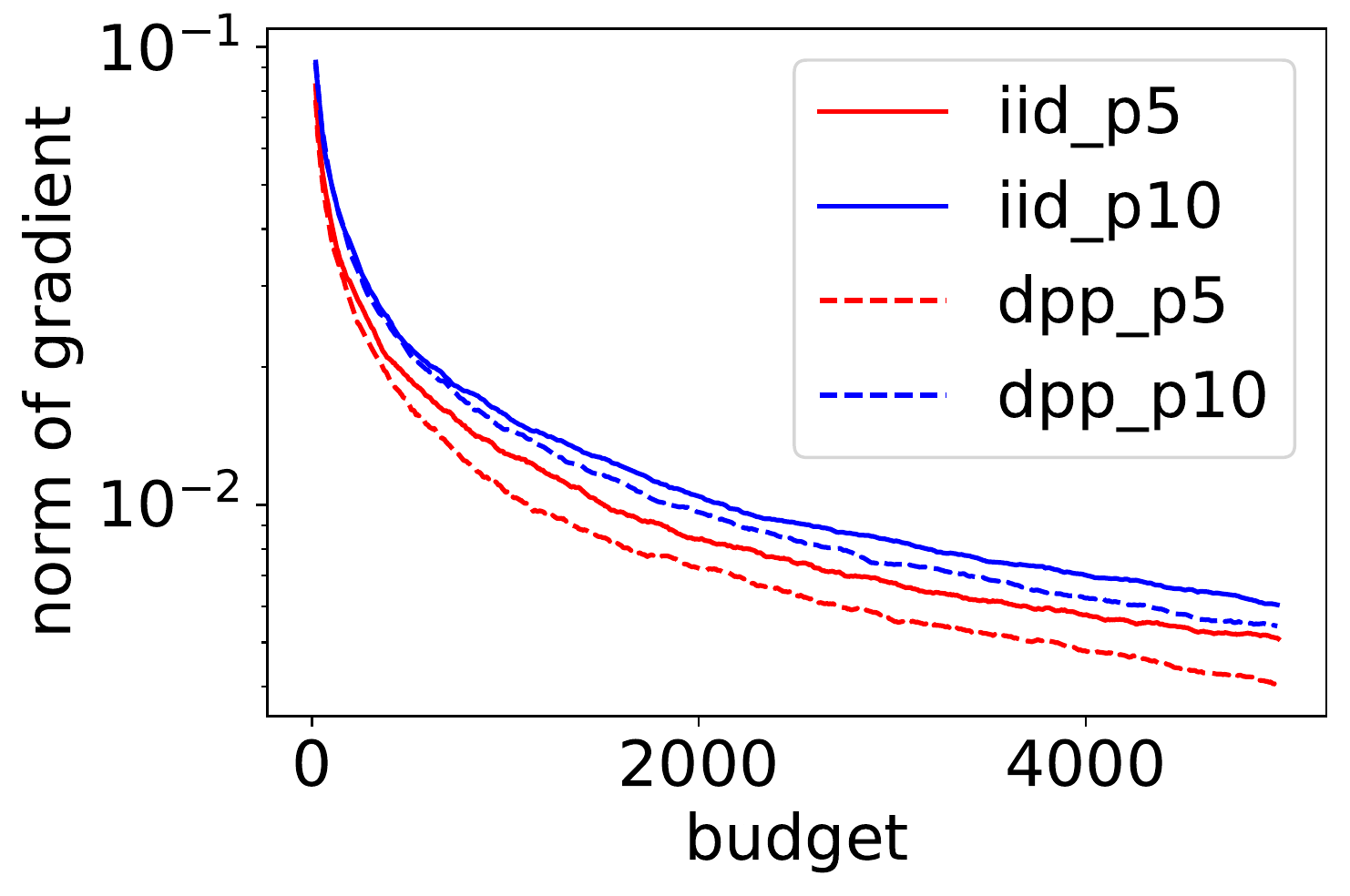}}
	\subfigure[$d=3$, $\el=\el_\text{lin}$, mixture]{ \label{fig:d3_linear_m2_grad}
		\includegraphics[width=0.33\linewidth]{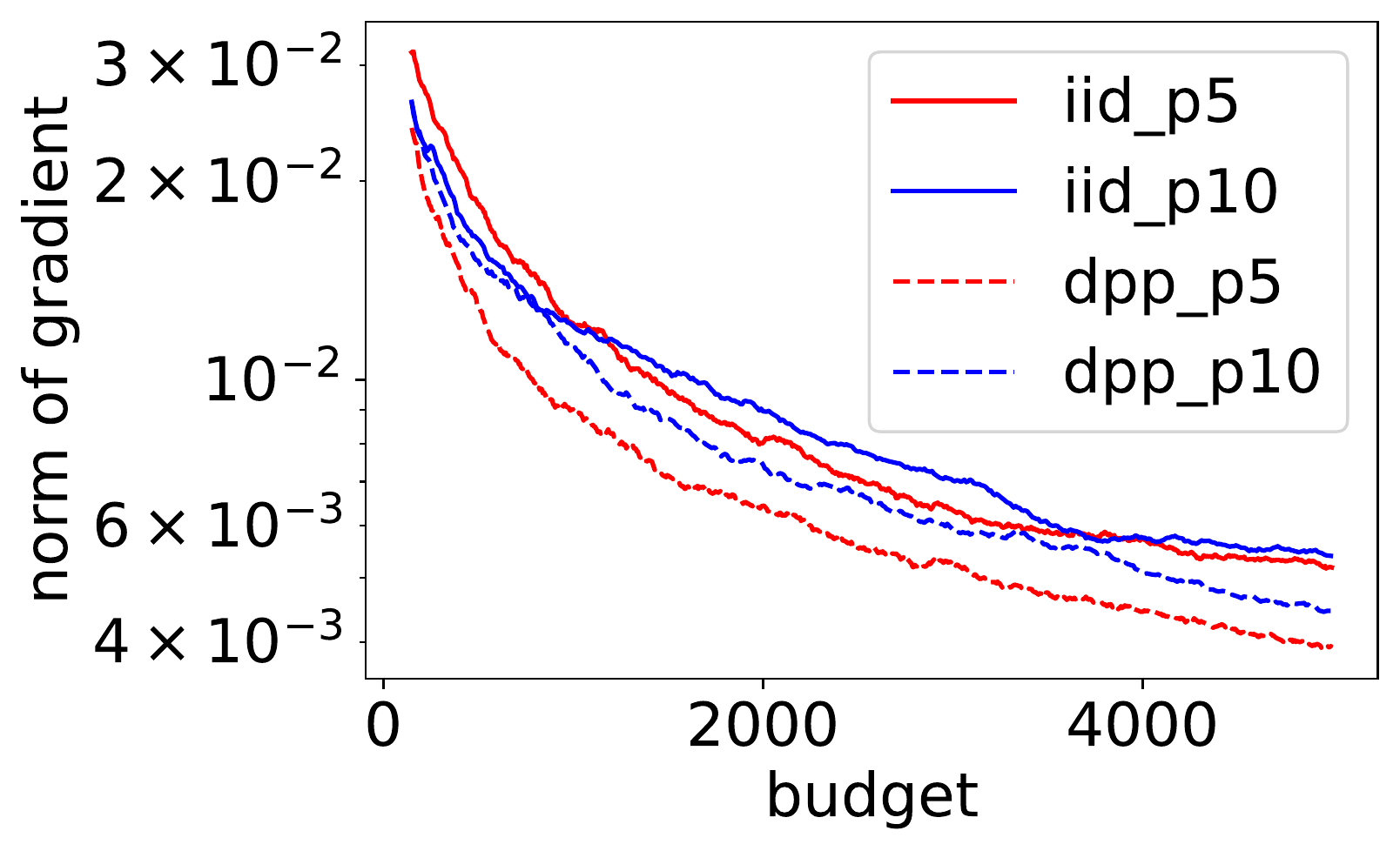}}
	\subfigure[$d=11$, $\el=\el_\text{log}$, uniform]{ \label{fig:d11_logistic_u_grad}
		\includegraphics[width=0.33\linewidth]{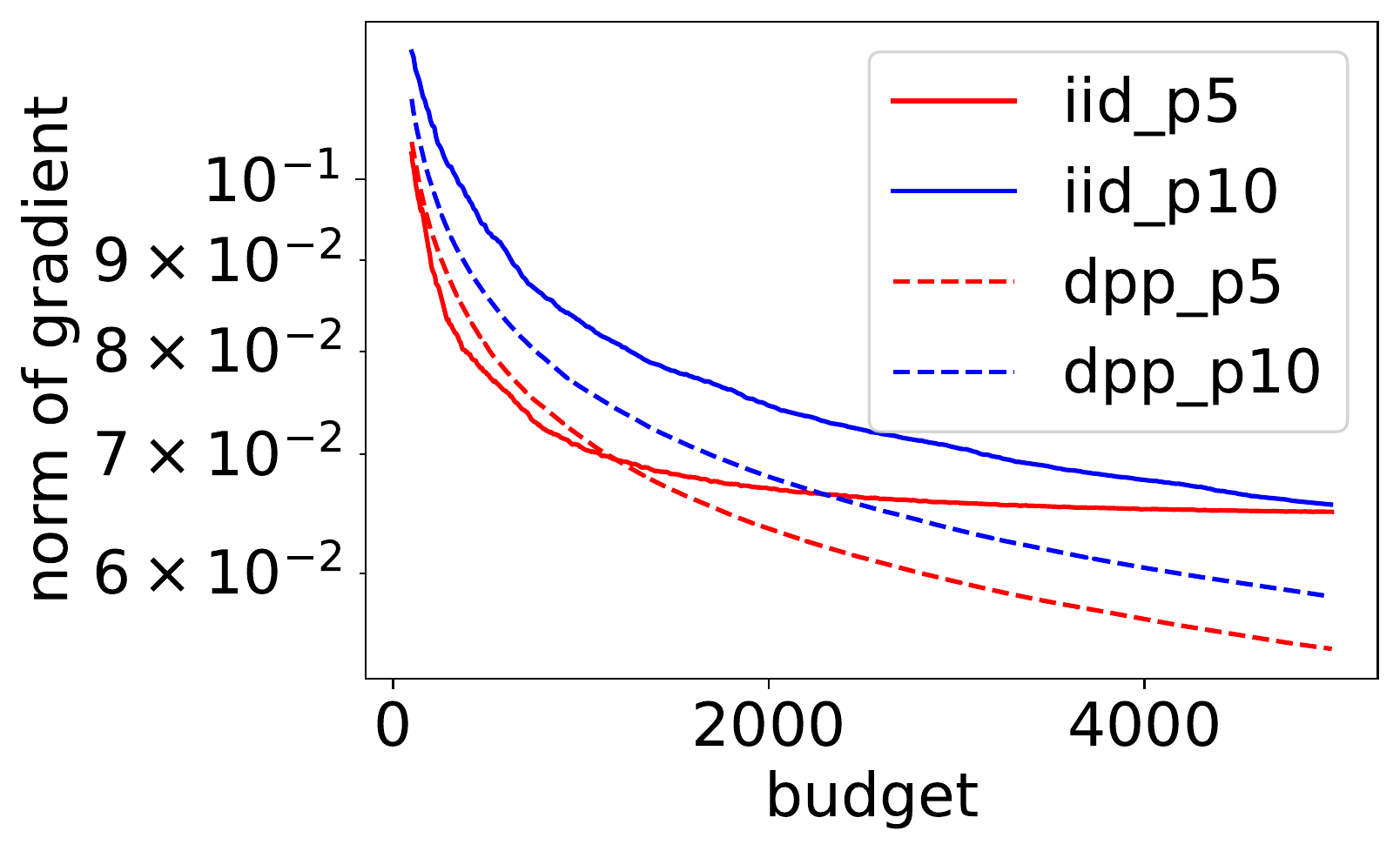}}
	\subfigure[$d=3$, $\el=\el_\text{lin}$, uniform]{ \label{fig:d3_linear_u_err}
		\includegraphics[width=0.32\linewidth]{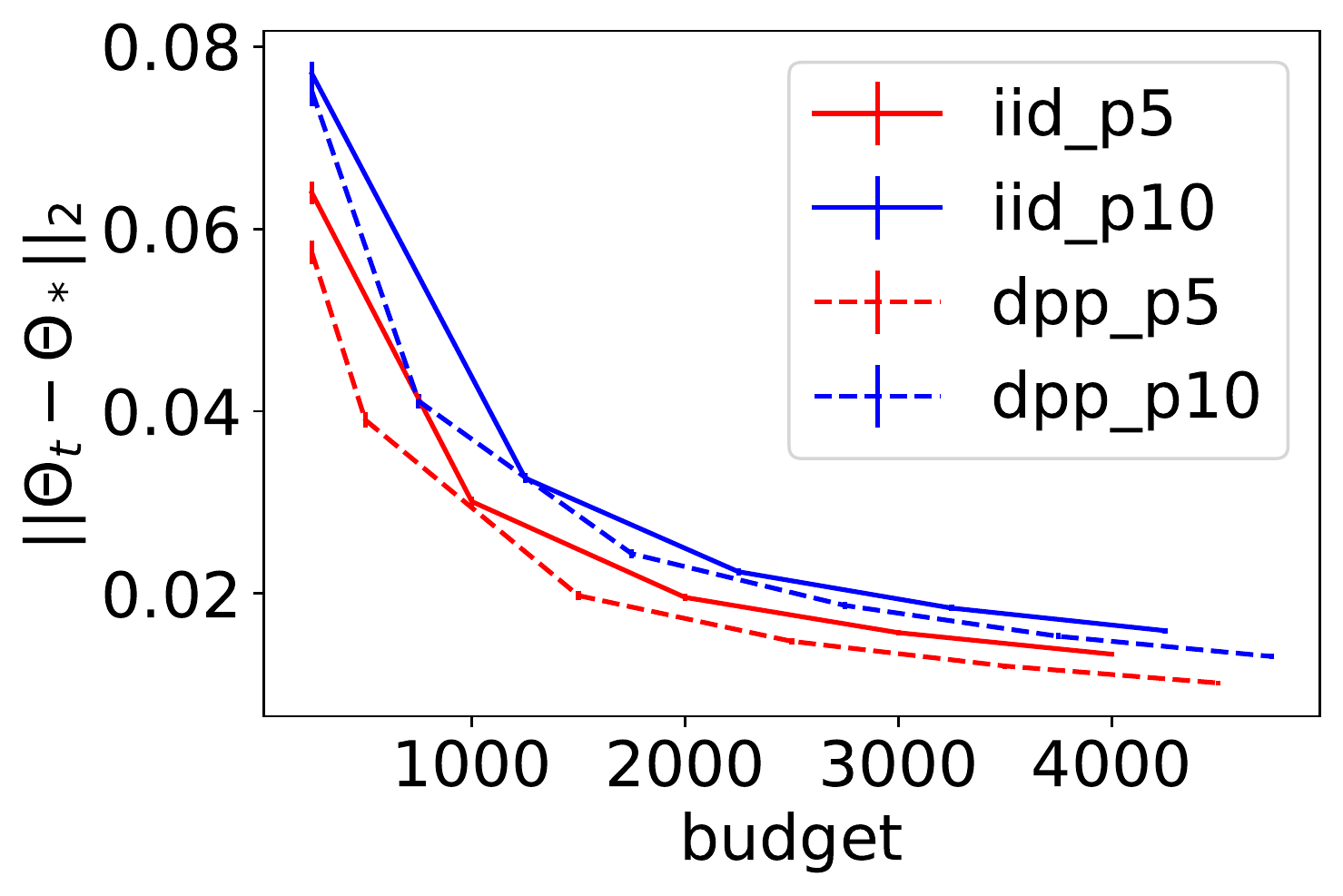}}
	\subfigure[$d=3$, $\el=\el_\text{lin}$, mixture]{ \label{fig:d3_linear_m2_err}
		\includegraphics[width=0.32\linewidth]{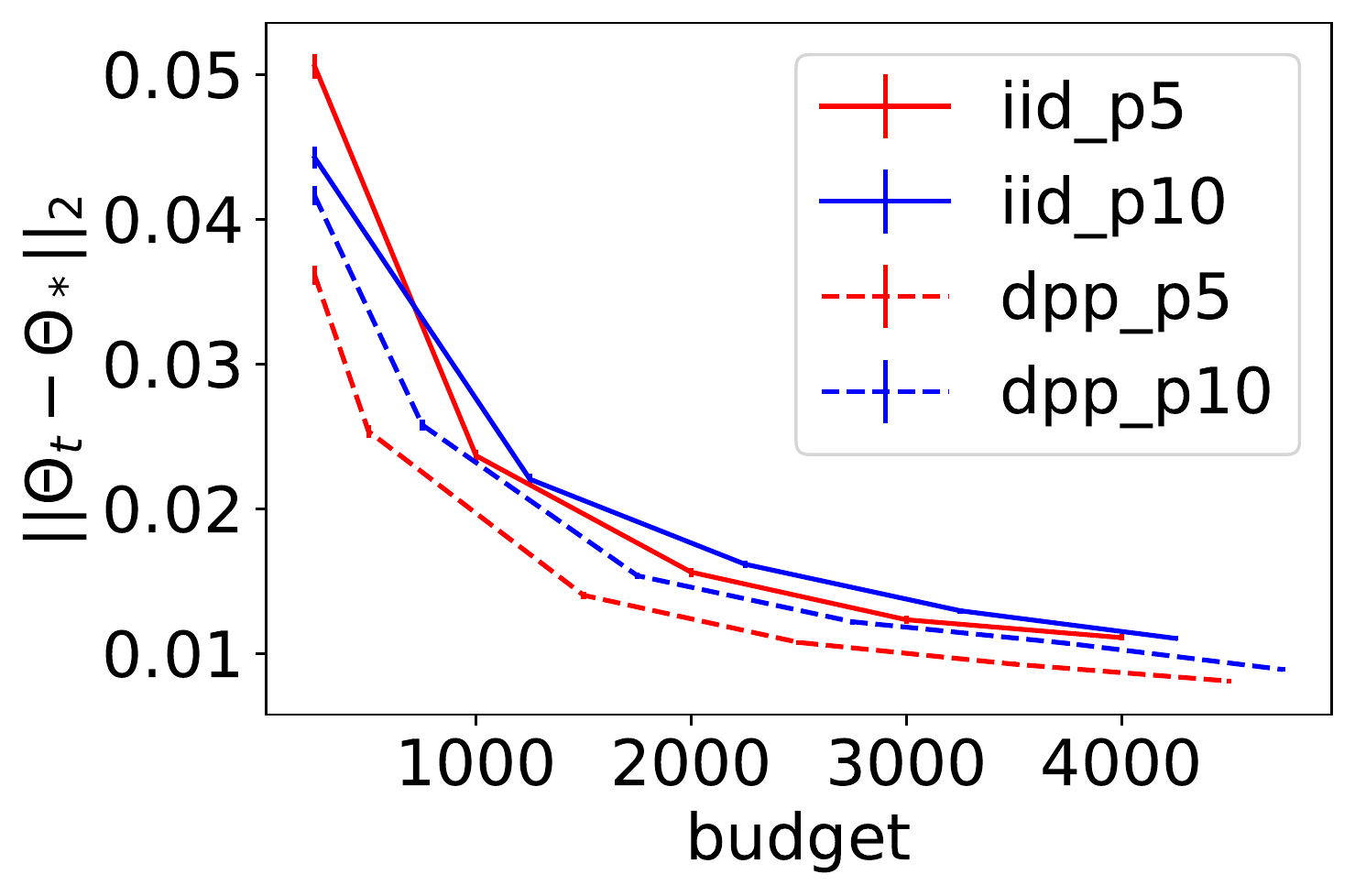}}
	\subfigure[$d=11$, $\el=\el_\text{log}$, uniform]{ \label{fig:d11_logistic_u_err}
		\includegraphics[width=0.32\linewidth]{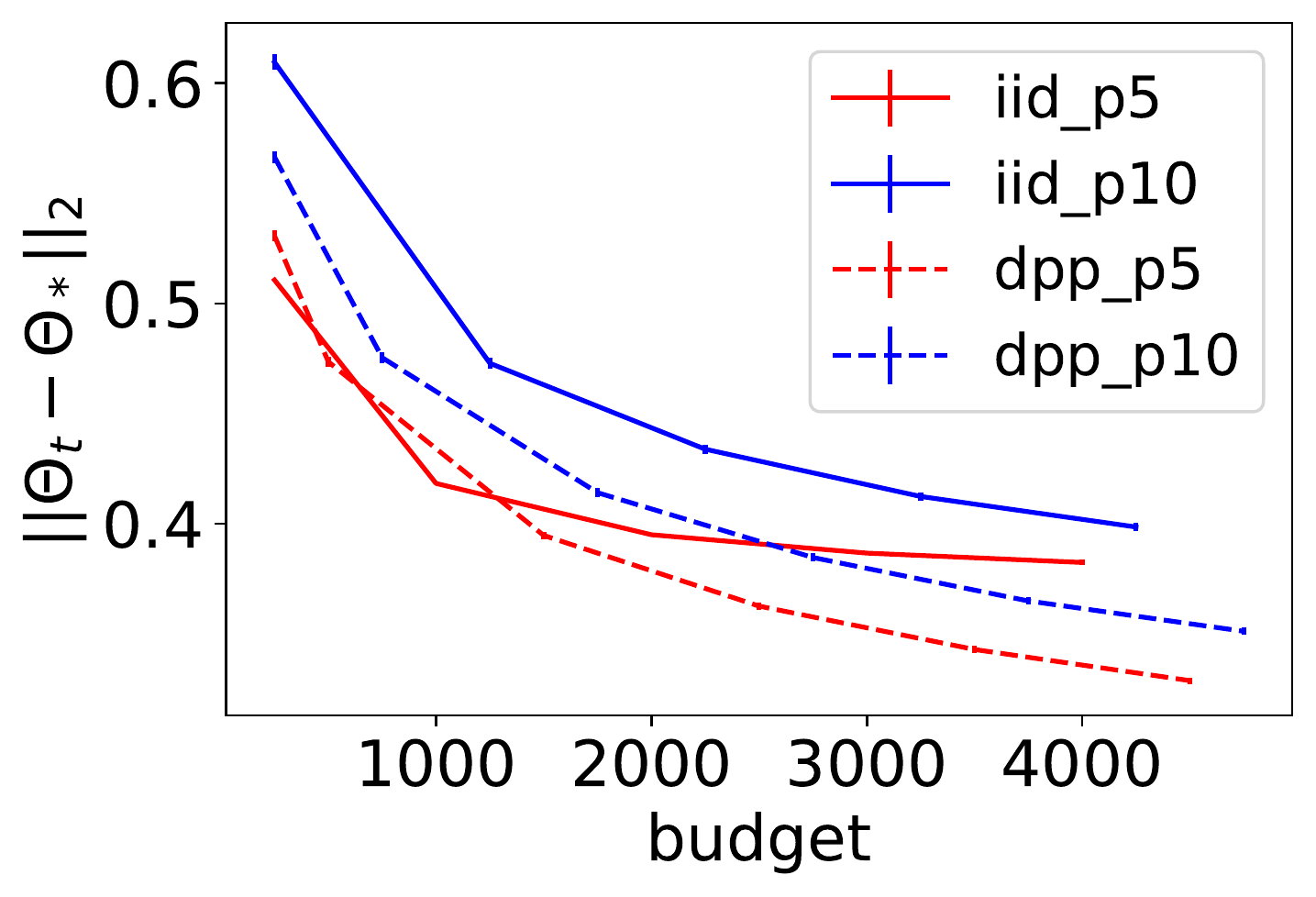}}
	\caption{Summary of the performance of two sampling strategies in SGD.}\label{fig:linear_uniform}
\end{figure}

\paragraph{Variance decay.}
For a given dimension $d$, we want to infer the rate of decay of the variance $\sigma^2 = \mathbb{E}[\Vert \Xi_A(\theta_\star) \Vert^2 \vert \D]$, to confirm the $\mathcal{O}_P(p^{-(1+1/d)})$ rate discussed in Section~\ref{s:theory}. We take $\el=\el_{\text{lin}}$ as an example, with $N=1000$ i.i.d. samples $\D$ from the uniform distribution on $[-1,1]^d$.
For $d=1,2,3$, we show in Figure~\ref{fig:slope_variance} the sample variance of $1000$ realizations of the variance of $\Vert \xap(\theta_\star) \Vert^2$ (white dots) and $\Vert \xad(\theta_\star) \Vert^2$ (black dots), conditionally on $\D$. Blue and red dots indicate standard $95\%$ marginal confidence intervals, for indication only. The slope found by maximum likelihood in a linear regression in the log-log scale is indicated as legend. The experiment confirms that $\sigma^2$ is smaller for the DPP, decays as $p^{-1-1/d}$, and that this decay starts at a batchsize $p$ that increases with dimension.

\begin{figure}[hbt!]
	\subfigure[$d=1$]{ \label{fig:slope_d1}
		\includegraphics[width=0.32\linewidth]{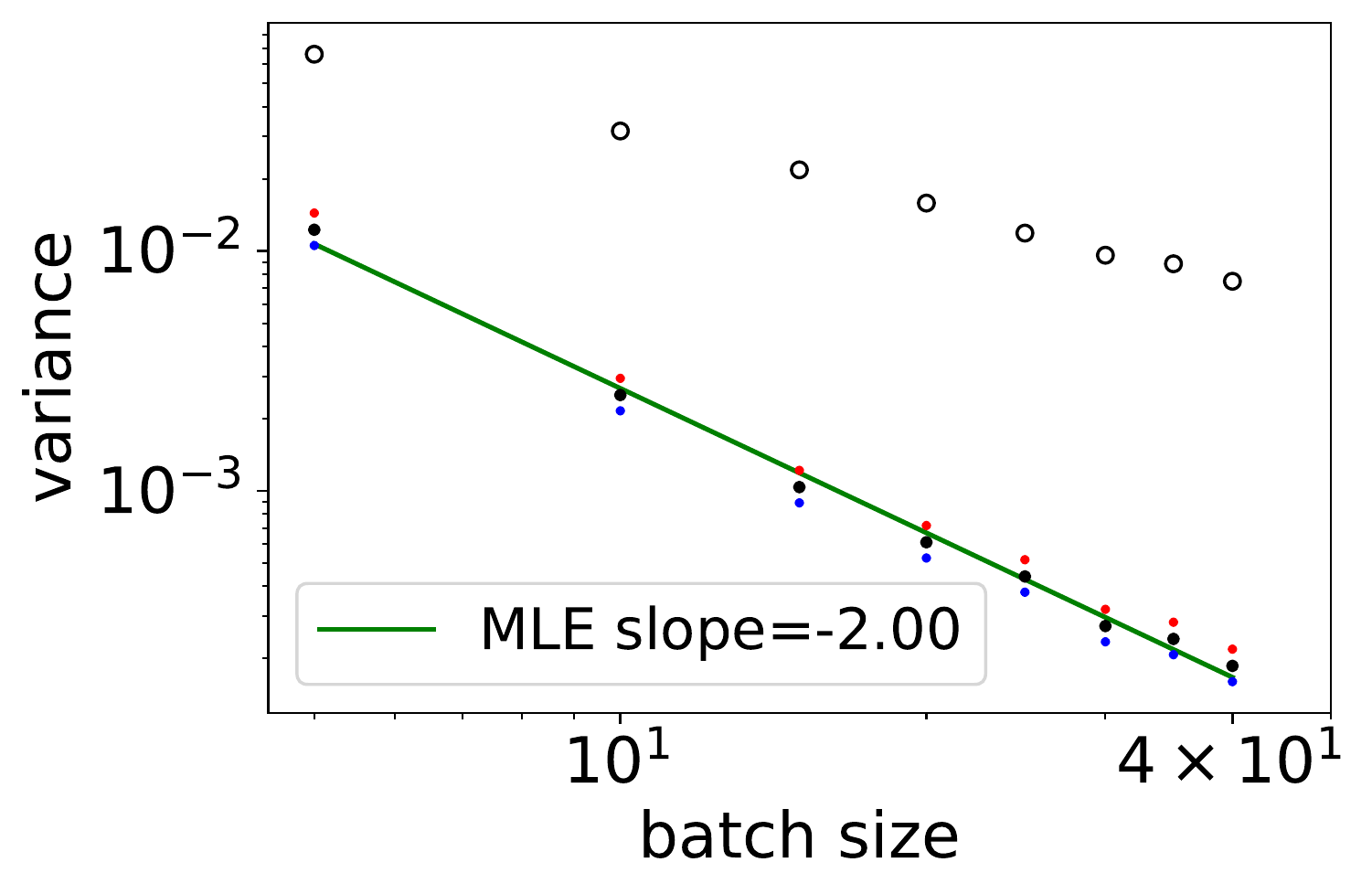}}
	\subfigure[$d=2$]{ \label{fig:slope_d2}
		\includegraphics[width=0.32\linewidth]{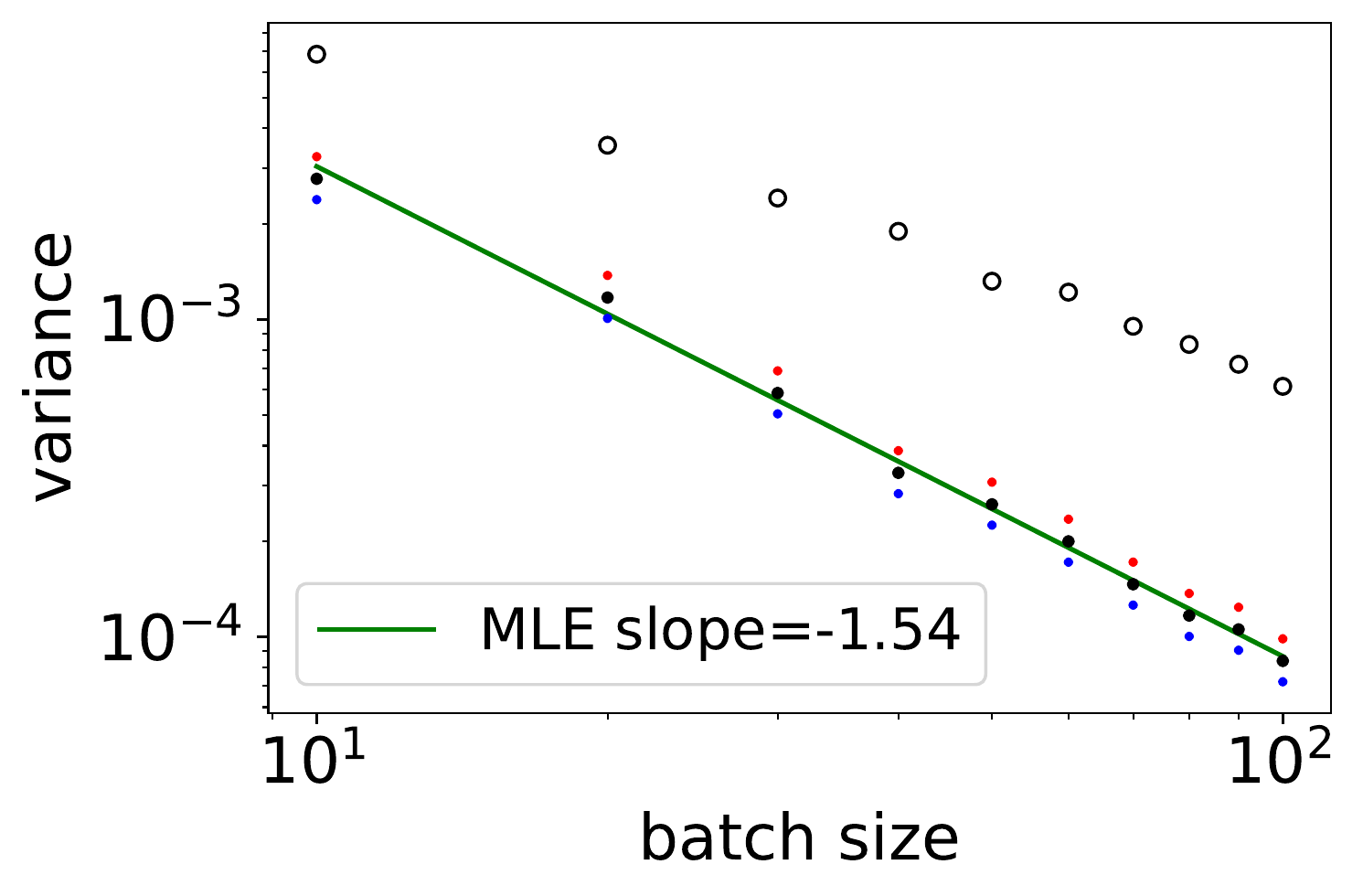}}
	\subfigure[$d=3$]{ \label{fig:slope_d3}
		\includegraphics[width=0.32\linewidth]{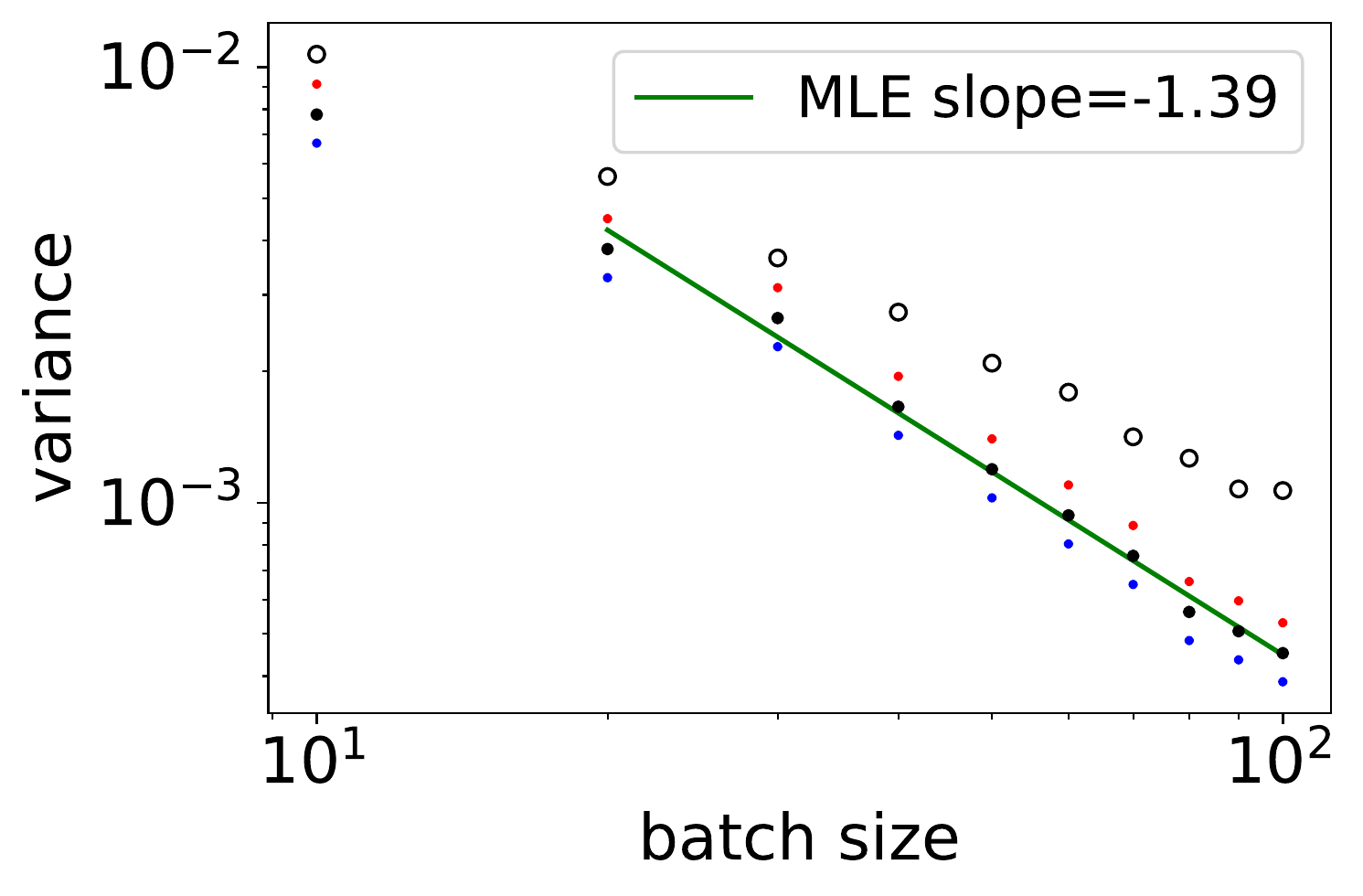}}
	\caption{Summary of the variance decay results.}\label{fig:slope_variance}
\end{figure}

\section{Discussion}
\label{s:discussion}
In this work, we introduced an orthogonal polynomial-based DPP paradigm for sampling minibatches in SGD that entails variance reduction in the resulting gradient estimator. We substantiated our proposal by detailed theoretical analysis and numerical experiments. Our work raises natural questions and leaves avenues for improvement in several directions. These include the smoothed estimator $\xas$, which calls for further investigation in order to be deployed as a computationally attractive procedure; improvement in the dimension dependence of the fluctuation exponent when the gradients are smooth enough, like \cite{BeBaCh19,BeBaCh20} did for \cite{BaHa20}; sharpening of the regularity hypotheses for our theoretical investigations to obtain a more streamlined analysis. While our estimators were motivated by a continuous underlying data distribution, our experiments suggest notably good performance in situations like logistic regression, where the data is at least partially discrete. Extensions to account for discrete settings in a principled manner, via discrete OPEs or otherwise, would be a natural topic for future research.
Another natural problem  is to compare our approach with other, non-i.i.d., approaches for minibatch sampling. A case in point is the method of importance sampling, where the independence across data points suggests that  the variance should still be $\mathcal{O}_P(1/p)$ as in uniform sampling. More generally, incorporating ingredients from other sampling paradigms to further enhance the variance reducing capacity of our approach would be of considerable interest.
Finally, while our results already partly apply to more sophisticated gradient estimators like Polyak-Rupert averaged gradients \cite[Section 4.2]{BaMo11}, it would be interesting to introduce repulsiveness across consecutive SGD iterations to further minimize the variance of averaged estimators.
In summary, we believe that the ideas put forward in the present work will motivate a new perspective on improved minibatch sampling for SGD, more generally on estimators based on linear statistics (e.g. in coreset sampling), and beyond.

\section*{Acknowledgements}
  RB acknowledges support from ERC grant \textsc{Blackjack} (ERC-2019-STG-851866) and ANR AI chair \textsc{Baccarat} (ANR-20-CHIA-0002). S.G. was supported in part by the MOE grants R-146-000-250-133 and R-146-000-312-114.

\appendix
\section*{Appendices}

\setcounter{equation}{0}
\renewcommand{\theequation}{S\arabic{equation}}
\renewcommand{\thetheorem}{S\arabic{theorem}}
\renewcommand{\thesection}{S\arabic{section}}

For ease of reference, sections, propositions and equations that belong to this supplementary material are prefixed with an `S'.

\section{The Poissonian Benchmark} \label{sec:S_Poisson}
Our benchmark for comparing the efficacy of our estimator would be a  subset $A \subset [N]$ obtained via Poissonian random sampling, which is characterised by independent random choices of the elements of $A$ from the ground set $[N]$, with each element of $[N]$ being selected independently with probability $p/N$. The estimator $\xap$ for Poissonian sampling is simply an analogue of the empirical average; in the context of \eqref{eq:linstat-loss} this is simply the choice $w_i=1/p$ for all $i$. While the true empirical average may be realised with $w_i = 1/|A|$; we exploit here the fact that, for large $p \ll N$, the empirical cardinality $|A|$ is tightly concentrated around its expectation $p$.

Denoting by $\chi_A(\cdot)$ the indicator function for inclusion in the set $A$, the variables $\chi_A(\z_i)$ are, under Poissonian sampling, i.i.d. Bernoulli random variables with parameter $p/N$.
It may then be easily verified that, $\E[\xap | \D]=\Xi_N$, whereas
\begin{align}
\var[\xap | \D] &= \frac{1}{p^2} \cdot \frac{p}{N} \l(1-\frac{p}{N}\r) \cdot \l( \sum_{i=1}^N \|\nt \el(\z_i,\t)\|^2 \r)\nonumber\\
&=\frac{1}{p} \cdot  \l(\frac{1}{N} \sum_{i=1}^N \|\nt \el(\z_i,\t)\|^2 \r) \l( 1 + \mathcal{O}(1/N)\r).
\label{eq:Poi-var}
\end{align}
Now, by a uniform central limit theorem \cite{Dud}, we obtain
\begin{equation} \label{eq:unif_CLT_Poi}
\frac{1}{N} \sum_{i=1}^N \|\nt \el(\z_i,\t)\|^2 = \int \|\nt \el(\z,\t)\|^2 \d\g(\z) + \mathcal{O}_P(N^{-1/2}).
\end{equation}

Thus, the conditional variance
\begin{equation} \label{eq:Poi-var-integral}
\var[\xap | \D] = \l( \frac{1}{p} \cdot \int \|\nt \el(\z,\t)\|^2 \d\g(\z) \r) + \mathcal{O}_P(N^{-1/2}).
\end{equation}
where $\g \in \mathcal{P}(\R^d)$, the space of probability measures on $\R^d$, is the (compactly supported) distribution of the data $\z$; see the paragraph on notation in Section~\ref{s:introduction}.
Equation \eqref{eq:Poi-var-integral} provides us with the theoretical benchmark against which to compare any new approach to minibatch sampling for stochastic gradient descent.

\section{Regularity Phenomena} \label{sec:S_reg}
For the purposes of our analysis, we envisage certain regularity behaviour for our kernels and loss functions, and discuss the natural basis for such assumptions. The discussion here complements the discussion on this topic undertaken in Section~\ref{s:theory} of the main text.

\subsection{Regularity phenomena and uniform CLTs}
In addition to the OPE asymptotics discussed in the main text, another relevant class of asymptotic results is furnished by Glivenko-Cantelli type convergence phenomena for empirical measures and kernel density estimators. These results are naturally motivated by convergence issues arising from the law of large numbers.
To elaborate, if $\{X_n\}_{n \ge 1}$ is a sequence of i.i.d. random variables following the same distribution as the random variable $X$, and $f$ is a function such that $\E[|f(X)|]<\infty$, then the law of large numbers tells us that the empirical average $1/N\cdot\sum_{i=1}^N f(X_i)$ converges almost surely to $\E[f(X)]$.
As is also well-known, the classical central limit theorem provides the distributional order of the difference $[1/N\sum_{i=1}^N f(X_i)-\E[f(X)]]$ as $N^{-1/2}$.

But the classical central limit theorem provides only distributional information, and is not sufficient for tracing the order of such approximations as along a sequence of data $\{X_i\}_{i=1}^N$, as $N$ grows. Such results are rather obtained from uniform central limit theorems, which provide bounds on this error as $\mathcal{O}_P(N^{-1/2})$ that hold uniformly over large classes of functions under very mild assumptions. We refer the interested reader to the extensive literature on uniform CLTs \cite{Dud}.

On a related note, we would also be interested in the approximation of a measure $\nu$  with density by a kernel smoothing $\tilde{\nu}$ obtained on the basis of a dataset $\{X_i\}_{i=1}^N$ \cite{WaJo94}. Analogues of the uniform CLT for such kernel density approximations are available, also according an $\mathcal{O}_P(N^{-1/2})$ approximation \cite{GiNi08}.

In our context, the uniform CLT is applicable to situations where the measure $\g$ is approximated by the empirical measure $\hm_N$, as well as $\tilde\g$ which is a kernel smoothing of $\hm_N$.

\section{Fluctuation analysis for determinantal samplers}
\label{sec:S_fluct}

In this section, we provide the detailed proof of Proposition~~\ref{prop:var_est} in the main text on the fluctuations of the estimator $\xas$, and a theoretical analysis for the fluctuations of the estimator $\xad$. Sections \ref{sec:S_1d} and \ref{sec:S_general_d} elaborate one of the central OPE-based ideas that enable us to obtain reduced fluctuations.

\subsection{Detailed proof of Proposition~~\ref{prop:var_est}}
\label{s:fluctuations_of_smoothed_estimator}
To begin with, we recall the fundamental integral expression controlling the variance of $\xas$, exploiting \eqref{eq:varlinstat} in the case of a projection kernel:

\begin{align*}
\var& [\xas|\D] = \iint \l\| \frac{\widehat{\nt \el}(\z,\t)}{q(\z)K_q^{(p)}(\z,\z)} - \frac{\widehat{\nt \el}(\w,\t)}{q(\w)K_q^{(p)}(\w,\w)} \r\|_2^2 |\kqp(\z,\w)|^2 \d q(\z) \d q(\w)  \\
= & \frac{1}{p^2}\iint \l\| \frac{\widehat{\nt \el}(\z,\t)}{q(\z)\cdot \frac{1}{p} K_q^{(p)}(\z,\z)} - \frac{\widehat{\nt \el}(\w,\t)}{q(\w)\cdot \frac{1}{p} K_q^{(p)}(\w,\w)} \r\|_2^2 |\kqp(\z,\w)|^2 \d q(\z) \d q(\w)  \\
\lesssim &  \cM_\t \cdot \frac{1}{p^2} \int\int \l\| \z - \w \r\|_2^2 |\kqp(\z,\w)|^2 \d q(\z) \d q(\w), \numberthis \label{eq:S_break_1}
\end{align*}
where we used the 1-Lipschitzianity of $\frac{\widehat{\nt \el}(\z,\t)}{q(\z)K_q^{(p)}(\z,\z)}$, with $\cM_\t=\mathcal{O}_P(1)$ the Lipschitz constant.

We control the integral in \eqref{eq:S_break_1} by invoking the renowned Christoffel-Darboux formula for the OPE kernel $\kqp$ \cite{Sim08}. As outlined in the main text, a fundamental idea which enables us to obtain reduced fluctuations in our determinantal samplers is that, in the context of \eqref{eq:S_break_1}, the $\|\z-\w\|_2$ term crucially suppresses fluctuations near the diagonal $\z=\w$; whereas far from the diagonal, the fluctuations are suppressed by the decay of the OPE kernel $\kqp$.

In Sections \ref{sec:S_1d} and \ref{sec:S_general_d} below, we provide the details of how this idea is implemented by exploiting the Christoffel-Darboux formula; first detailing the simpler 1D case, and subsequently examining the case of general $d$ dimensions. In \eqref{eq:S_fluct_final}, we will finally demonstrate the desired $\mathcal{O}_P(p^{-(1+1/d)})$ order of the fluctuations of $\xas$ given the dataset $\D$, thereby completing the proof.

\subsubsection{Reduced fluctuations in one dimension} \label{sec:S_1d}
We first demonstrate how to control \eqref{eq:S_break_1} for $d=1$. The Christoffel-Darboux formula reads
\begin{equation} \label{eq:CD}
\kqp(x,y) = a_{p} \cdot  \frac{\phi_p(x)\phi_{p-1}(y) -  \phi_p(y)\phi_{p-1}(x)}{x-y},
\end{equation}
where $a_{p}$ is the so-called first recurrence coefficient of $q$; see \cite[Section 1 to 3]{Sim08}.
But we assumed in Section~\ref{s:first_estimator} that $q$ is Nevai-class, which actually implies $a_p\rightarrow 1/2$ by definition; see \cite[Section 4]{BaHa20}.
This implies that, in $d=1$, we have
\begin{align*}
\var[\xas|\D]  &
\lesssim   \cM_\t \cdot \frac{1}{p^2} \cdot \iint (x-y)^2 \cdot \frac{\l(\phi_p(x)\phi_{p-1}(y) -  \phi_p(y)\phi_{p-1}(x)\r)^2}{(x-y)^2}   \d q(x) \d q(y) \\
& \le   \cM_\t  \cdot \frac{1}{p^2} \cdot \iint \l(\phi_p(x)\phi_{p-1}(y) -  \phi_p(y)\phi_{p-1}(x)\r)^2   \d q(x) \d q(y) \\
&  =   \cM_\t  \cdot \frac{1}{p^2} \cdot \l( 2 \|\phi_p\|_{L_2(q)}^2  \|\phi_{p-1}\|_{L_2(q)}^2 -  2 \lg \phi_p , \phi_{p-1}  \rg^2 \r) \\
&  =   2\cM_\t  \cdot \frac{1}{p^2} .
\end{align*}

\subsubsection{Reduced fluctuations in  general dimension}
\label{sec:S_general_d}
For $d>1$, following the main text we consider that the measure $q$ splits as a product measure of $d$ Nevai-class $q_i$s, i.e. $q = \otimes_{i=1}^d q_i$.
Assume that $p=m^d$ for some $m\in\mathbb{N}$ for simplicity; we discuss at the end of the section how to treat the case $p^{1/d}\notin \mathbb N$.

Because of the graded lexicographic ordering that we chose for multivariate monomials,
$\Kqp$ writes as a tensor product of $d$ coordinate-wise OPE kernels of degree $m$, namely $\kqp(\cdot,\cdot) = \prod_{j=1}^d K_{q_j}^{(m)}(\cdot,\cdot)$.
Now, observe that for any $j=1,\dots,d$,
\[ \iint  |K_{q_j}^{(m)}(z,y)|^2 \d \g(y) \d \g(x) = \int K_{q_j}^{(m)}(x,x) \d \g(x) = m.  \]

As a result, for $d>1$, setting $\z=(x_1,x_2,\ldots,x_d)$ and $\w=(y_1,y_2,\ldots,y_d)$, it comes
\begin{align*}
\var[\xad|\D] &
\lesssim  \cM_\t \cdot \frac{1}{p^2} \cdot \iint \l( \sum_{i=1}^d (x_i-y_i)^2 \r)\cdot |\kqp(\z,\w)|^2 \d q(\z) \d q(\w) \\
&=    \cM_\t \cdot \frac{1}{p^2} \cdot \sum_{i=1}^d \l(  \iint  (x_i-y_i)^2 \cdot |\kqp(\z,\w)|^2 \d q(\z) \d q(\w) \r)\\
&=  \cM_\t \cdot \frac{1}{p^2} \cdot \sum_{i=1}^d \l(  \iint  (x_i-y_i)^2 \cdot \prod_{j=1}^d |K^j(x_j,y_j)|^2  \prod_{j=1}^d \d q_j(x_j) \d q_j(y_j) \r),
\end{align*}
leading to
\begin{align*}
&\var[\xad|\D]  \\
&=  \!\!  \cM_\t \frac{1}{p^2} \!\sum_{i=1}^d \!\l[ \!\prod_{j \ne i} \!\l( \! \iint\!\! |K^j(x_j,y_j)|^2   \d q_j(x_j) \d q_j(y_j)\! \r) \!\!\cdot\!\! \l(\! \iint \!\! (x_i-y_i)^2 |K^i(x_i,y_i)|^2  \d q_i(x_i) \d q_i(y_i)\! \r) \!\r] \\
&=   \cM_\t \cdot \frac{1}{p^2} \cdot \sum_{i=1}^d \l[ (p^{1/d})^{d-1} \cdot \l( \iint  (x_i-y_i)^2 |K^i(x_i,y_i)|^2  \d q_i(x_i) \d q_i(y_i) \r) \r] \\
&\lesssim   \cM_\t \cdot \frac{1}{p^2} \cdot \sum_{i=1}^d \l[ (p^{1/d})^{d-1} \cdot 2 \r] \\
&\lesssim  \cM_\t \cdot \frac{d}{p^{1+{1}/{d}}}, \numberthis \label{eq:S_fluct_final}
\end{align*}
where we have used our earlier analyses of Section~\ref{sec:S_1d}.

When $p^{1/d}\notin\mathbb N$, the kernel $\Kqp$ does not necessarily decompose as a product of $d$ one-dimensional OPE kernels.
However, the graded lexicographic order that we borrowed from \cite{BaHa20} ensures that $\Kqp$ departs from such a product only up to $\mathcal O(\lfloor p^{1/d}\rfloor)$ additional terms, whose influence on the variance can be controlled by a counting argument; see e.g. \cite[Lemma 5.3]{BaHa20}.

\subsection{Fluctuation analysis for $\xad$}
Again using the formula \eqref{eq:varlinstat} for the variance of linear statistics of DPPs, and remembering that $\hka$ is a projection matrix, we may write
\begin{align*}
\var[\xad|\D]
= \frac{1}{N^2} \cdot \sum_{i=1}^N \sum_{j=1}^N \l\| \nt  \el(\z_i,\t)\hka_{ii}^{-1} - \nt  \el(\z_j,\t)\hka_{jj}^{-1} \r\|_2^2 |\hka_{ij}|^2 . \numberthis \label{eq:fluct_1}
\end{align*}
At this point, we set $\tau$ to be the exponent in the order of spectral approximation $\mathcal{O}_P(N^{-\tau})$ obtained in Section \ref{sec:S_spectral_approx} (our analysis in Section \ref{sec:S_spectral_approx} indicates a choice of $\tau=1/2$; nonetheless we present the analysis here in terms of the parameter $\tau$, so as to leave open the possibility of simple updates to our bound based on more improved analysis of the spectral approximation). We use the integral and spectral approximations \eqref{eq:spectral_approx-2} and \eqref{eq:integral_approx_1}, and the inequality $|a-b|^2 \le 2|a|^2+2|b|^2$ to continue from \eqref{eq:fluct_1} as
\begin{align*}
&  \l\| \nt  \el(\z_i,\t)\hka(\z_i,\z_i)^{-1} - \nt  \el(\z_j,\t)\hka(\z_j,\z_j)^{-1} \r\|_2^2 |\hka(\z_i,\z_j)|^2 \\
& \le   \Bigg( (2\l\| \nt  \el(\z_i,\t)\ka(\z_i,\z_i)^{-1} - \nt  \el(\z_j,\t)\ka(\z_j,\z_j)^{-1} \r\|_2^2\\
&\ \ +(\ka(\z_i,\z_i)^{-2} +\ka(\z_j,\z_j)^{-2})\mathcal{O}_P(N^{-2\tau}) \Bigg) 
\times \l(2|\ka(\z_i,\z_j)|^2+\mathcal{O}_P(N^{-2\tau})\r). \numberthis \label{eq:fluct_2}
\end{align*}

We may combine \eqref{eq:fluct_1} and \eqref{eq:fluct_2} to obtain
\begin{align*}
\var& [\xad|\D] \\
&\lesssim \frac{1}{N^2} \cdot \sum_{i=1}^N \sum_{j=1}^N   \l\| \nt  \el(\z_i,\t)\ka(\z_i,\z_i)^{-1} - \nt  \el(\z_j,\t)\ka(\z_j,\z_j)^{-1} \r\|_2^2 |\ka(\z_i,\z_j)|^2 \\
& + \frac{1}{N^2} \cdot \sum_{i=1}^N \sum_{j=1}^N \mathcal{O}_P(N^{-2\tau})\cdot\l( \frac{1}{\ka(\z_i,\z_i)^{2}} +  \frac{1}{\ka(\z_j,\z_j)^{2}}  \r) \cdot |\ka(\z_i,\z_j)|^2  \\
& + \frac{1}{N^2} \cdot \sum_{i=1}^N \sum_{j=1}^N \l\| \nt  \el(\z_i,\t)\ka(\z_i,\z_i)^{-1} - \nt  \el(\z_j,\t)\ka(\z_j,\z_j)^{-1} \r\|_2^2 \mathcal{O}_P(N^{-2\tau})  \\
& + \mathcal{O}_P(N^{-2\tau}). \numberthis \label{eq:fluct_3}
\end{align*}
This is where we need more assumptions. We recall our assumption that $\nt  \el(\z,\t)(\frac{1}{p}\kqp(\z,\z))^{-1}$ is uniformly bounded in $\z \in D$. This assumption is justified, for the kernel part, by OPE asymptotics for Nevai-class measures; see Totik's classical result \cite[Theorem 4.8]{BaHa20}.
For the gradient part, it is enough to assume that $\nt  \el(\z,\t)$ is uniformly bounded in $\z\in D$ and $\t$ (with $\g(\z)$ being bounded away from 0 and $\infty$).
Coupled with the hypothesis that $q(\z)$ and the density $\gamma(\z)$ of $\gamma$ are uniformly bounded away from 0 and $\infty$ on $D$, and the uniform CLT for the convergence $\tg(z)= \g(\z)+\mathcal{O}_P(N^{-1/2})$, we may deduce that
\begin{align*}
\nt  &\el(\z,\t)\l(\frac{1}{p}\ka(\z,\z)\r)^{-1} \\
&= \nt  \el(\z,\t)\l(\frac{1}{p}\kqp(\z,\z)\r)^{-1} \cdot \frac{\tg(\z)}{q(\z)} \\
&= \nt  \el(\z,\t)\l(\frac{1}{p}\kqp(\z,\z)\r)^{-1}\cdot \frac{\g(\z)}{q(\z)}+\nt  \el(\z,\t)(\frac{1}{p}\kqp(\z,\z))^{-1}\cdot \frac{1}{q(\z)} \cdot \mathcal{O}_P(N^{-1/2}) \\
&= \mathcal{O}_P(1)+\mathcal{O}_P(N^{-1/2}) \\
&= \mathcal{O}_P(1).  \numberthis  \label{eq:fluct_4}
\end{align*}

We now demonstrate how the spectral approximations lead to approximations for integrals appearing in the above fluctuation analysis for $\xad$.
To give the general structure of the argument, to be invoked multiple times in the following, we note that
\begin{equation} \label{eq:integral_approx_1}
\frac{1}{N^2} \sum_{i,j=1}^N \mathcal{O}_P(a(N)) \l|\frac{1}{p}\cdot \ka(\z_i,\z_j)\r|^2 = \mathcal{O}_P(a(N)),
\end{equation}
using the fact that $\frac{1}{p}\cdot |\ka(\z_i,\z_j)| \le \sqrt{\frac{1}{p}\cdot \ka(\z_i,\z_i)}\sqrt{\frac{1}{p}\cdot \ka(\z_j,\z_j)}$ via the Cauchy-Schwarz inequality, and that $\frac{1}{p}\cdot\ka(\z_i,\z_i)$ is $\mathcal{O}_P(1)$ by OPE asymptotics.

We may combine \eqref{eq:fluct_3} and \eqref{eq:fluct_4} to deduce that
\begin{align*}
\var&[\xad|\D] \\
&\lesssim \frac{1}{N^2} \cdot \sum_{i=1}^N \sum_{j=1}^N   \l\| \nt  \el(\z_i,\t)\ka(\z_i,\z_i)^{-1} - \nt  \el(\z_j,\t)\ka(\z_j,\z_j)^{-1} \r\|_2^2 |\ka(\z_i,\z_j)|^2 \\
& + \mathcal{O}_P(N^{-2\tau}). \numberthis  \label{eq:fluct_5}
\end{align*}

We now proceed as from \eqref{eq:fluct_5} as  follows:
\begin{align*}
& \var [\xad|\D] \\
&\!\lesssim\!\! \frac{1}{N^2} \! \sum_{i,j=1}^N  \!\!  \l\| \nt  \el(\z_i,\t)\l(\frac{1}{p}\ka(\z_i,\z_i)\r)^{-1} \!\!\!\!-\! \nt  \el(\z_j,\t)\l(\frac{1}{p}\ka(\z_j,\z_j)\r)^{-1} \!\r\|_2^2 \l|\frac{1}{p}\ka(\z_i,\z_j)\r|^2 \\
& \qquad
+ \mathcal{O}_P(N^{-2\tau})\\
&\!\lesssim\!	\!\! \iint \!\! \l\| \nt  \el(\z,\t)\!\!\l(\frac{1}{p}\ka(\z,\z)\r)^{-1} \!\!\!\!\!\!-\! \nt  \el(\w,\t)\!\!\l(\frac{1}{p}\ka(\w,\w)\r)^{-1} \!\r\|_2^2 \l|\frac{1}{p}\ka(\z,\w)\r|^2 \!\!\!\d \g(\z) \d \g(\w) \\
&  \qquad+ \mathcal{O}_P(N^{-1/2}) + \mathcal{O}_P(N^{-2\tau}) \\
&\!\lesssim\! \!\! \iint \!\! \l\| \nt  \el(\z,\t)\!\!\l(\frac{1}{p}\ka(\z,\z)\r)^{-1} \!\!\!\!\!\!-\! \nt  \el(\w,\t)\!\!\l(\frac{1}{p}\ka(\w,\w)\r)^{-1}\! \r\|_2^2 \l|\frac{1}{p}\ka(\z,\w)\r|^2 \!\!\! \d \tg(\z) \d \tg(\w) \\
&  \qquad + \mathcal{O}_P(N^{-1/2}) + \mathcal{O}_P(N^{-2\tau})	\\
&\!\lesssim\!	\!\! \frac{1}{p^2}\!\!\iint \!\! \l\| \nt  \el(\z,\t)\!\!\l(\frac{1}{p}\ka(\z,\z)\r)^{-1} \!\!\!\!\!\!-\! \nt  \el(\w,\t)\!\!\l(\frac{1}{p}\ka(\w,\w)\r)^{-1} \!\r\|_2^2 \!\l|\kqp(\z,\w)\r|^2\!\!\! \d q(\z) \d q(\w) \\
&  \qquad+ \mathcal{O}_P(N^{-1/2}) + \mathcal{O}_P(N^{-2\tau}), \numberthis  \label{eq:interm-1}
\end{align*}
where, in the last two steps, we have used the assumption that  $ \l|\frac{1}{p}\ka(\z,\w)\r|^2$ and $\nt  \el(\z,\t)\l(\frac{1}{p}\ka(\z,\z)\r)^{-1}$ are  $\mathcal{O}_P(1)$, and used the rate of convergence of $\hm_N$ to $\g$ to pass from the sum to the integral respect to $\g$, and from there to the integral with respect to $\tg$ using the rate estimates for kernel density estimation, incurring an additive cost of $\mathcal{O}_P(N^{-1/2})$ in each step.

Using the hypothesis that $ \nt  \el(\z,\t)\l(\frac{1}{p}\ka(\z,\z)\r)^{-1}$ is 1-Lipschitz with a Lipschitz constant that is $\mathcal{O}_P(1)$, we may proceed from \eqref{eq:interm-1} as
\begin{align*}
\var[\xad|\D] &\lesssim \cM_\t \frac{1}{p^2}\int\int  \l\| \z - \w \r\|_2^2 \l|\kqp(\z,\w)\r|^2\d q(\z) \d q(\w) \!+ \!\mathcal{O}_P(N^{-1/2}) \!+\! \mathcal{O}_P(N^{-2\tau}), \numberthis  \label{eq:interm-2}
\end{align*}
where $\cM_\t^{1/2}$ is the Lipschitz constant that is $\mathcal{O}_P(1)$.
We are back to analyzing the same variance term as in Section~\ref{s:fluctuations_of_smoothed_estimator}, and the rest of the proof follows the very same lines.

\subsubsection{Spectral approximations} \label{sec:S_spectral_approx}
We analyse in this section the approximation error when we replace $\ka$ by $\hka$ in Equation~\eqref{eq:fluct_2}. To this end, we study the difference between the entries of $\hka$ and those of $\ka(\cdot,\cdot)$ when restricted to the data set $\D$. We recall the fact that $\hka$ is viewed as a kernel on the space $L_2(\hm_N)$.

The idea is that, since $N$ is large, the kernel $\hka$ acting on $L_2(\hm_N)$, which is obtained by spectrally rounding off the kernel $\ka$ acting on $L_2(\hm_N)$, is well approximated by the kernel $\ka$ acting on $L_2(\tg)$.
By definition, $\ka(\z,\w)=\sqrt{\frac{q(\z)}{\tg(\z)}}\kqp(\z,\w)\sqrt{\frac{q(\w)}{\tg(\w)}}$, we may deduce that $\ka (\cdot,\cdot) \d \tg(\cdot) \d \tg(\cdot) = \kqp (\cdot,\cdot) \d q(\cdot) \d q(\cdot)$. Now, the kernel $\kqp (\cdot,\cdot)$ is a projection on $L_2(q)$.  As such, the spectrum of $(\ka,\d \tg)$ is also  close to a projection. Since $\hka$ is obtained by rounding off the spectrum of $\ka$ to $\{0,1\}$, the quantities $|\hka(\z_i,\z_j)-\ka(\z_i,\z_j)|$ will be ultimately by controlled by how close the kernel $(\ka|_\D,\hm_N)$ is from a true projection.

To analyse this, we consider the operator $[\ka|_\D]^2$ on $L_2(\hm_N)$, which is an integral operator given by the  convolution kernel
\begin{align*}
\ka|_\D \star \ka|_\D (\z_i,\z_k)& = \int \ka(\z_i,\z_j)\ka(\z_j,\z_k) \d \hm_N(\z_j) \\
&=   \frac{1}{N} \sum_{j=1}^N \ka(\z_i,\z_j)\ka(\z_j,\z_k) \\
&=  \int \ka(\z_i,\w) \ka(\w,\z_k) \d \tg(\w) + \mathcal{O}_P(N^{-1/2})    \\
&=  \ka(\z_i,\z_k) + \mathcal{O}_P(N^{-1/2}) \\
&=  \ka|_\D(\z_i,\z_k) + \mathcal{O}_P(N^{-1/2}),
\end{align*}
where we have used the convergence of the kernel density estimator $\tg$ as well as the empirical measure $\hm_N$ to the underlying measure $\g$, at the rate $\mathcal{O}_P(N^{-1/2})$ described e.g. by the uniform CLT.

We may summarize the above by observing that $\ka|_\D^2$ on $L_2(\hm_N)$ is a projection up to an error of $\mathcal{O}_P(N^{-1/2})$, which indicates an approximation of $|\hka(\z_i,\z_j)-\ka|_\D(\z_i,\z_j)| = \mathcal{O}_P(N^{-1/2})$.

To understand the estimator $\xad$, we also need to understand $\hka(\z_i,\z_i)^{-1}$, which we will deduce from the above discussion.
To this end, we observe that
\begin{align*}
|\hka(\z_i,\z_i)^{-1} - \ka(\z_i,\z_i)^{-1}|
& = |(\ka(\z_i,\z_i)+\mathcal{O}_P(N^{-1/2}))^{-1} - \ka(\z_i,\z_i)^{-1}| \\
& = \ka(\z_i,\z_i)^{-1} \cdot \mathcal{O}_P(N^{-1/2}) \numberthis \label{eq:spectral_approx-2} .
\end{align*}
In drawing the above conclusion, we require that $\ka(\z_i,\z_i)$ stays bounded away from 0, which we justity as follows. We recall that $\ka(\z_i,\z_i)=\kqp(\z_i,\z_i) \cdot \frac{q(\z_i)}{\tg(\z_i)}$. We recall from OPE asymptotics that $\kqp(\z_i,\z_i)$ is of the order $p$; whereas $\tg(\z_i)=\g(\z_i)+\mathcal{O}_P(N^{-1/2})$ from kernel density approximation and uniform CLT asymptotics. We recall our hypothesis that the densities $q,\g$ are bounded away from $0$ and $\infty$ on $\D$. Putting together all of the above, we deduce that  $\ka(\z_i,\z_i)$ is of order $p$; in particular it is bounded away from 0 as desired.

\subsection{Order of $p$ vs $N$ and future work} \label{sec:S_further}
In this section, we discuss the  relative order of $p$ and $N$, especially in the context of the estimator $\xas$. In order to do this for $\xas$, we undertake a classic bias-variance trade-off argument. To this end, we recall that while the bias is $\mathcal{O}_P(ph/N)$ (with $h$ being the window size for kernel smoothing), the variance is $\mathcal{O}_P(p^{-(1+1/d)})$. Further, we substitute one of the canonical choices for the window size $h$, which is to set $h=N^{-1/d}$ for dimension $d$ and data size $N$. Setting the bias and the standard deviation to be roughly of the same order, we obtain a choice of $p$ as $p=N^{\frac{2d+1}{3d+1}}$. For the estimator $\xad$, a similar analysis may be undertaken. However, while the finite sample bias is 0, the variance term is more complicated, particularly with the contributions from the spectral approximation $\mathcal{O}_P(N^{-\tau})$. We believe that the present analysis of the spectral approximation can be further tightened  and rigorised to yield more optimal values of $\tau$ that more closely mimic  experimental performance. Further avenues of improvement include better ways to handle the boundary effects (to control the asymptotic intensity of the OPE kernels that behave in a complicated manner at the boundary of the background measure); methods to bypass the spectral round-off step in constructing the estimator $\xad$; hands-on analysis of the errors caused by switching between discrete measures and continuous densities that is tailored to our setting (and therefore raising the possibility of sharper error bounds), among others.

\section{Experiments on a real dataset}
\label{supp:s:real_dataset}
To extensively compare the performance of our gradient estimator $\xad$ to the default $\xap$, we run the same experiment as in Section~\ref{s:experiments} of the main paper, but on a benchmark dataset from LIBSVM\footnote{https://www.csie.ntu.edu.tw/~cjlin/libsvmtools/datasets/}. We download the
\textit{letter} dataset, which consists of $15000$ training samples and $5000$ testing samples, where each sample contains $16$ features. We modify the $26$-class classification problem into a binary classification problem where the goal is to separate the classes $1$-$13$ from the other $13$ classes. Denote the preprocessed dataset as \textit{letter.binary}. We consider $\el=\el_{\text{lin}}$, $\lambda_0=0.001$ and $p=10$. Figure \ref{fig:letter} summarizes the experimental results on \textit{letter.binary}, where the performance metrics are averaged over $1000$ independent runs of each SGD variant. The left figure shows the decrease of the objective function value, the middle figure shows how the norm of the complete gradient $\Vert \Xi_N(\theta_t)\Vert $ decreases with the \emph{budget}, and the right figure shows the value of the test error. Error bars in the last figure are $\pm$ one standard deviation of the mean. In the experiment, we can see that using a DPP improves over Poisson minibatches of the same size both in terms of minimizing the empirical loss, and of reaching a small test error with a given budget. Compared to the experimental results on the simulated data in the main paper, we can see that although the $\mathcal{O}_P(p^{-(1+1/d)})$ rate discussed in Section~\ref{s:theory} becomes slower as $d$ grows, our DPP-based minibatches still gives better performance on this real dataset with $d=16$ compared to Poisson minibatches of the same size, which again demonstrates the significance of variance reduction in SGD.

\begin{figure}[hbt!]
	\subfigure{ \label{fig:letter_fun}
		\includegraphics[width=0.3\linewidth]{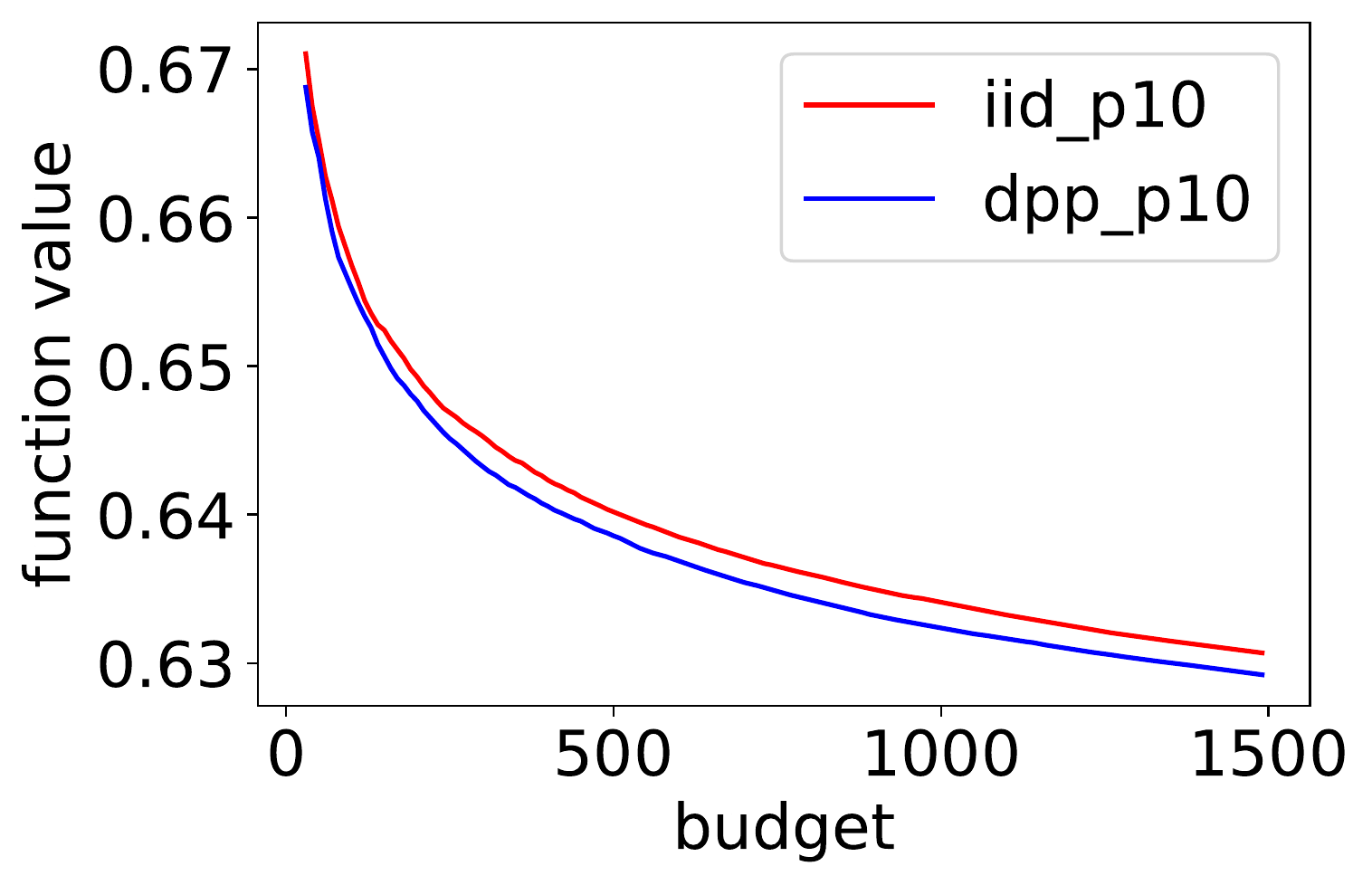}}
	\subfigure{ \label{fig:letter_loggrad}
		\includegraphics[width=0.33\linewidth]{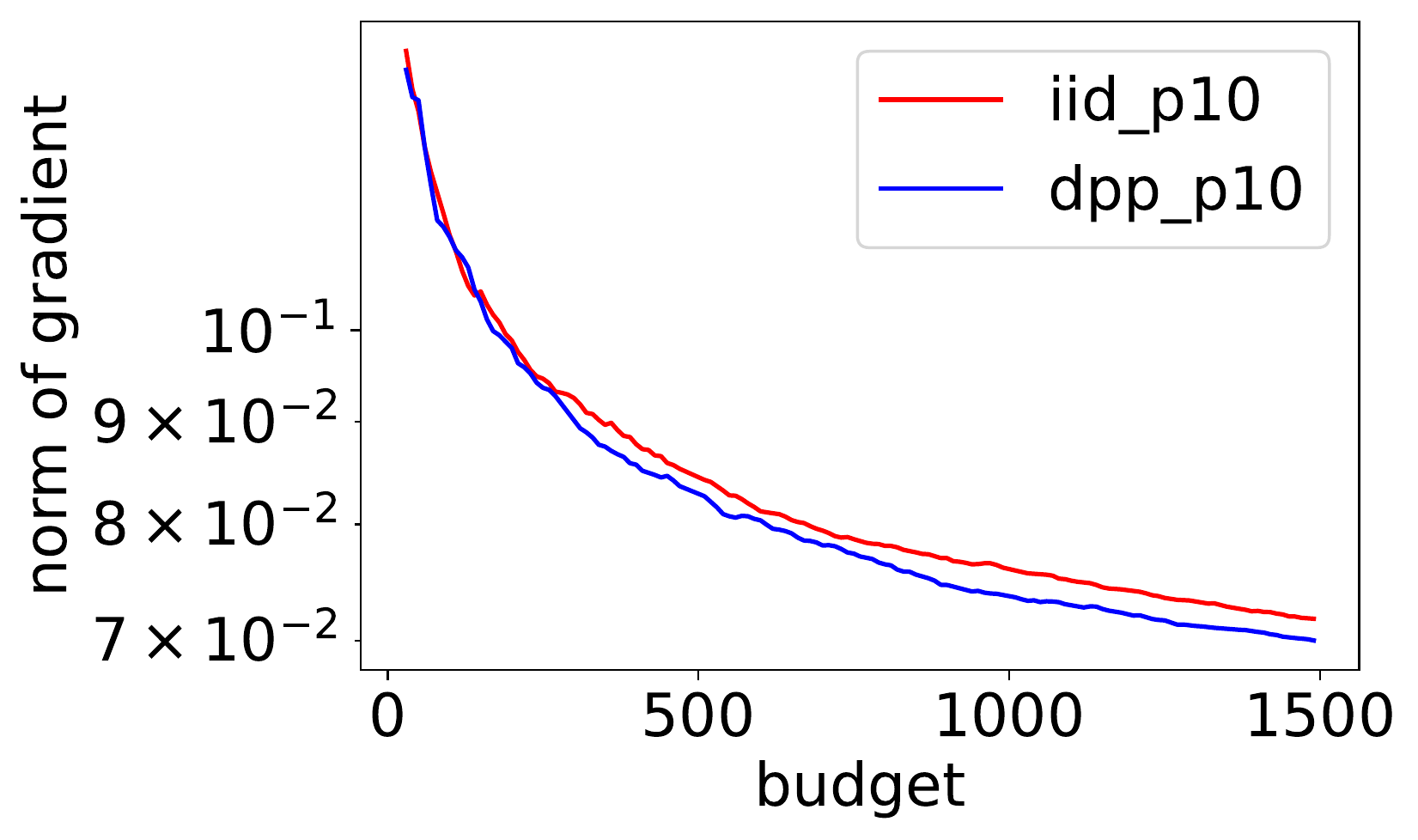}}
	\subfigure{ \label{fig:letter_err_errorbar}
		\includegraphics[width=0.32\linewidth]{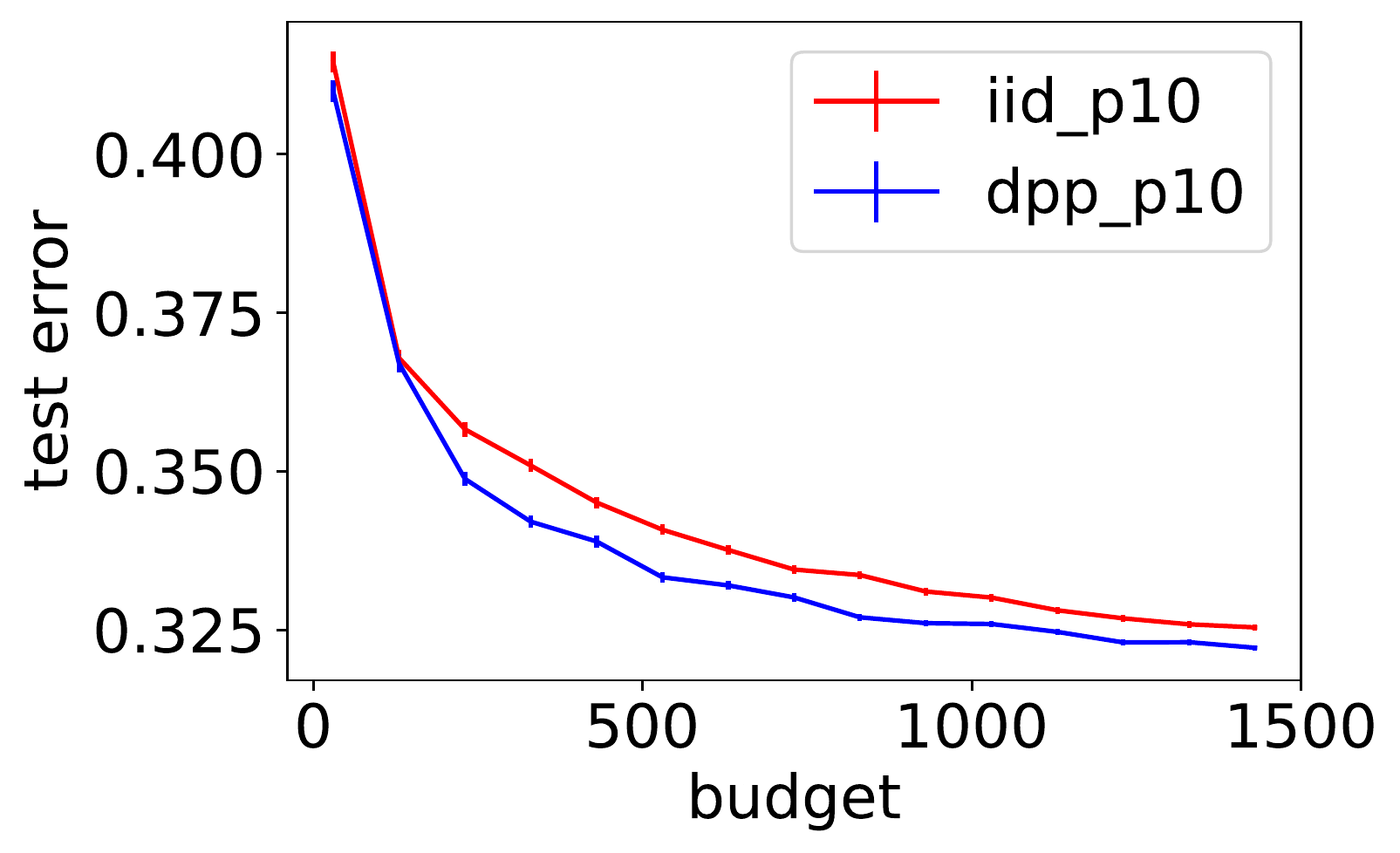}}
	\caption{Logistic regression on the \textit{letter.binary} dataset.}\label{fig:letter}
\end{figure}


\end{document}